\newcommand{\proofof}[1]{\textbf{\textup{Proof of \Cref{#1}}}}
\newtheorem{theorem}{Theorem}
\newtheorem{proposition}{Proposition}
\newtheorem{lemma}{Lemma}
\newtheorem{corollary}{Corollary}
\newtheorem{observation}{Observation}
\newtheorem{remark}{Remark}
\theoremstyle{definition}
\newtheorem{definition}{Definition}
\newtheorem{example}{Example}
\Crefname{observation}{Observation}{Observations}
\Crefname{example}{Example}{Examples}
\Crefname{lemma}{Lemma}{Lemmas}
\Crefname{theorem}{Theorem}{Theorems}
\Crefname{proposition}{Proposition}{Propositions}
\Crefname{corollary}{Corollary}{Corollaries}
\DeclarePairedDelimiter\abs{\lvert}{\rvert}
\DeclarePairedDelimiter\norm{\lVert}{\rVert}
\DeclareMathOperator*{\argmax}{arg\,max}
\newcommand{\E}[1]{\mathbb{E}\left[ #1 \right]}
\newcommand{\prob}{\textsc{Rec-APC}}
\newcommand{\ovv}{\overline{V}}
\newcommand{\unv}{\underline{V}}
\newcommand{\bb}{\bm b}
\newcommand{\bP}{\bm P}
\newcommand{\bq}{\bm q}
\newcommand{\vs}{V^\star}
\newcommand{\ps}{\pi^\star}
\newcommand{\IFTHEN}[2]{\STATE \textbf{if} #1 \textbf{then} #2}
\newcommand{\alglinelabel}[1]{%
  \addtocounter{ALC@line}{-1}
  \refstepcounter{ALC@line}
  \label{#1}
}
\renewcommand{\paragraph}{%
  \@startsection{paragraph}{4}{\z@}%
  {0em}
  {-1em}
  {\normalfont\normalsize\bfseries}
}
\title{Churn-Aware Recommendation Planning under Aggregated Preference Feedback}
\author{
Gur Keinan%
\thanks{%
    {Technion---Israel Institute of Technology (\url{gur.keinan@campus.technion.ac.il})}}
\and Omer Ben{-}Porat%
\thanks{%
    {Technion---Israel Institute of Technology (\url{omerbp@technion.ac.il})}}
}
\date{}
\begin{document}

\maketitle

\begin{abstract}
We study a sequential decision-making problem motivated by recent regulatory and technological shifts that limit access to individual user data in recommender systems (RSs), leaving only population-level preference information. This privacy-aware setting poses fundamental challenges in planning under uncertainty: Effective personalization requires exploration to infer user preferences, yet unsatisfactory recommendations risk immediate user churn. To address this, we introduce the $\prob$ model, in which an anonymous user is drawn from a known prior over latent user types (e.g., personas or clusters), and the decision-maker sequentially selects items to recommend. Feedback is binary---positive responses refine the posterior via Bayesian updates, while negative responses result in the termination of the session.

We prove that optimal policies converge to pure exploitation in finite time and propose a branch-and-bound algorithm to efficiently compute them. Experiments on synthetic and MovieLens data confirm rapid convergence and demonstrate that our method outperforms the POMDP solver SARSOP, particularly when the number of user types is large or comparable to the number of content categories. Our results highlight the applicability of this approach and inspire new ways to improve decision-making under the constraints imposed by aggregated preference data.
\end{abstract}

\section{Introduction}

Recommender systems (RSs) have become essential in digital media and e-commerce. They collect massive amounts of data and apply sophisticated techniques to improve user engagement and satisfaction. RSs leverage past user interaction, demographics, and possibly additional information to provide personalized recommendations. For instance, Netflix and Amazon utilize these techniques to offer tailored movie recommendations and product suggestions, respectively. While RSs typically collect and record user data, some have limited access to individual user information or none at all. For instance, regulations such as the General Data Protection Regulation (GDPR) and the California Consumer Privacy Act (CCPA) impose stringent rules on data usage and user consent. Privacy prioritization has also become a trend among commercial companies; for example, Apple's iOS 14 opt-in device tracking modification significantly impacted targeted advertising~\citep{kollnig2022goodbye}. In some cases, the RS can record \emph{user sessions} of varying length, but cannot identify users. While these regulatory shifts aim to protect privacy, they pose substantial challenges to RSs in maintaining the same level of service. Due to these privacy-preserving limitations, RSs may at times be limited to utilizing \emph{aggregated user information}, such as clusters of users or personas.

Relying solely on aggregated information significantly affects the recommendation process. When interacting in a user session without accurate individual data, the RS must explore more intensively to understand the current user's preferences. This initial exploration phase is critical for gathering enough data to make accurate recommendations later in the session. However, aggressive exploration bears the risk of \emph{user churn}, where users may leave the system due to receiving unsatisfactory recommendations. For example, when a song-recommendation RS encounters a user with unknown preferences, it may suggest a song from an unconventional genre, which some users enjoy, while most dislike. Although user feedback on that unconventional genre can provide valuable insights into their preferences, it can cause users to leave the RS if the recommendation is off-target; thus, the RS should address the possibility of churn as part of its design.

In this paper, we propose a model to study the intertwined challenges of aggregated user information and churn risk. We call our model $\prob$, standing for \textbf{Rec}ommendation with \textbf{A}ggregated \textbf{P}references under \textbf{C}hurn. Our model assumes that the RS has a probabilistic prior over user types (clusters, personas, etc.) and aggregated satisfaction levels for each content type (genre, etc.) with respect to each user type. Each user session involves an unidentified user whose type is sampled from this prior distribution and is unknown to the RS. The RS recommends content sequentially, receiving binary feedback (i.e., "like" or "dislike") from the user. This feedback enables the RS to infer the user type in a Bayesian sense, thereby improving future recommendations. However, careless recommendations can lead to user churn. The objective of the RS is to maximize user utility, defined as the number of contents the user likes.

\subsection{Our contribution}
Our contribution is three-fold. First, this work is among the first to address aggregated user preferences and the risk of churn simultaneously (see related work below). We propose a model where the general population preferences are known and the type (cluster, persona, etc.) of users is hidden but can be deduced through interaction. This interaction drives user engagement, which we model as the RS's reward. However, due to uncertainty about user type, the RS may generate unsatisfactory recommendations that can lead to user churn. Our model presents a novel exploration-exploitation trade-off, particularly regarding the risk of churn, an aspect that has been under-explored in the current literature.

Our second contribution is technical. Analyzing the model shows that optimal policies converge after a finite number of recommendations, symbolizing a transition to pure exploitation. Informally, we focus on a broad class of \emph{well-separated} instances (see Definition~\ref{def:well separated}), and show that
\begin{theorem}[Informal statement of \Cref{thm:convergence}]
The (infinite-length) optimal policy converges.
\end{theorem}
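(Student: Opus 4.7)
The plan is to combine a posterior-concentration argument with a continuity-plus-gap argument on the value function. I would first recast the problem as a POMDP whose state is the posterior belief $\bb_t$ over user types, with Bayesian update $\bb_{t+1}(\theta) \propto P_{a,\theta}\bb_t(\theta)$ on a like after playing item $a$, and termination on a dislike. The Bellman equation reads $V(\bb) = \max_a \sum_\theta \bb(\theta) P_{a,\theta} \bigl(1 + V(\bb_a)\bigr)$, where $\bb_a$ is the posterior conditional on a like from $a$. Boundedness of $V$ follows because each surviving step requires a like whose probability is strictly below one under well-separation, so the expected number of likes, and hence $V$, is finite.

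Second, I would show that conditional on the episode continuing indefinitely, $\bb_t$ converges almost surely to a vertex corresponding to the true latent type. Under the true type $\theta$, each coordinate $\bb_t(\theta')$ is a bounded $\mathbb{P}_\theta$-martingale, so Doob's theorem yields almost-sure convergence. The well-separated hypothesis should force the limit to be degenerate: for any two distinct types $\theta \neq \theta'$, the items the optimal policy eventually plays must separate them, which makes the log-likelihood ratio $\log\bigl(\bb_t(\theta)/\bb_t(\theta')\bigr)$ drift linearly under $\mathbb{P}_\theta$, driving $\bb_t(\theta') \to 0$ on the event of infinite survival.

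Third, I would lift belief convergence into action convergence via a continuity-and-gap argument. Standard arguments give Lipschitz continuity of $V$ on the belief simplex, and the one-step expression $\sum_\theta \bb(\theta) P_{a,\theta} \bigl(1 + V(\bb_a)\bigr)$ is continuous in $\bb$ for each fixed $a$. At a vertex $e_\theta$ the unique optimal action is $a^\star_\theta \in \argmax_a P_{a,\theta}/(1 - P_{a,\theta})$, uniqueness again following from well-separation. By continuity and the strict gap between $a^\star_\theta$ and the next-best action at $e_\theta$, there is an open neighborhood of $e_\theta$ on which $a^\star_\theta$ remains uniquely optimal. Combined with the concentration result, every infinite-survival trajectory enters this neighborhood in finite time, after which the optimal policy prescribes the constant action $a^\star_\theta$, which is exactly the claimed convergence to pure exploitation.

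The hard part will be the second step, because identifiability arguments cannot be applied to an arbitrary policy: the optimal policy is endogenous and may lock onto a small set of items as soon as the belief becomes lopsided. What is needed is that whichever items the optimal policy plays still separate any two types that retain non-negligible posterior mass. I expect \Cref{def:well separated} to be tailored precisely to close this loop, and the cleanest route is probably an induction on the support of the belief, showing that whenever two types remain plausible the policy is forced to play an item that distinguishes them (otherwise pure exploitation would already be optimal), so that the likelihood ratios evolve favorably until only one type remains.
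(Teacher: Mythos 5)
Your overall architecture (belief concentration, then a continuity-plus-gap argument showing the myopic action is uniquely optimal near a vertex) matches the paper's, and your third step is essentially the paper's Lipschitz bound on $\vs$ together with \Cref{prop:myopic-near-boundary}. The genuine gap is in your second step. The object whose convergence must be established is the \emph{deterministic} belief walk $\bb_{t+1}(m) \propto \bb_t(m)\,\bP(\pi_t,m)$ obtained by always updating on a like; since $p_{\max}<1$, the event of infinite survival has probability zero, so "conditional on the episode continuing indefinitely" is not a measure you can do martingale theory under, and Doob's theorem (which gives a.s.\ convergence of the posterior under the \emph{unconditional} mixture measure, where the session terminates a.s.\ in finite time) says nothing about this sequence. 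Your claim that the log-likelihood ratio $\log\bigl(\bb_t(\theta)/\bb_t(\theta')\bigr)$ "drifts linearly" is exactly the assertion that needs proof: well-separation guarantees each increment $\log\bigl(\bP(\pi_t,\theta)/\bP(\pi_t,\theta')\bigr)$ has magnitude bounded away from zero, but not that the increments have a consistent sign, and the paper's own \Cref{example:4} exhibits a walk whose mass on the eventually-selected type first \emph{decreases}. Your closing suggestion — that the policy must play separating items — does not close the loop either, because under \Cref{def:well separated} \emph{every} item separates \emph{every} pair of types; separation is not the obstruction, directionality of the drift is.

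The paper closes this gap with a value-function potential argument that has no probabilistic content: \Cref{prop:gap-between-value-function} shows that at any $\delta$-unconcentrated belief the optimal walk satisfies $\vs(\tau(\bb,\ps_1(\bb))) - \vs(\bb) \ge \delta(1-\delta)c^2/(1-c) > 0$ (proved by comparing to the stationary policy that repeats $\ps_1(\bb)$ and lower-bounding the one-step increase of $p_k$ by a variance computation), and since $\vs \le (1-c)/c$, \Cref{cor:limited-unconcetrated} concludes the walk visits only finitely many unconcentrated beliefs. One further point your step three elides: even after entering the neighborhood of a vertex $e_\theta$, playing $a^\star_\theta$ can push mass toward a different type $\theta'$ with $\bP(a^\star_\theta,\theta') > \bP(a^\star_\theta,\theta)$ (this is the necessity direction of \Cref{prop:convergence-condition}), so the walk may leave the neighborhood. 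The paper handles this by showing a single Bayesian update cannot jump from one concentrated region directly into another, so every such escape passes through an unconcentrated belief — of which there are only finitely many — and convergence follows. I would recommend replacing your martingale step with a deterministic monotonicity argument of this kind.
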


Our third contribution is algorithmic. We leverage our theoretical results to develop a straightforward yet state-of-the-art branch-and-bound algorithm designed explicitly for our setting. As the problem we address can be described as a partially observable Markov decision process (POMDP), we compare our algorithm with the state-of-the-art POMDP benchmark~\citep{kurniawati2009sarsop}.\footnote{All code and experimental data for reproducing the results presented in this paper are available at \url{https://github.com/GurKeinan/Churn-Aware-Recommendation}} In practice, our algorithm performs better when there is a large variety of user types but is less effective when the number of contents is significantly larger than the number of user types.

\subsection{Related work}
Our work captures several phenomena: First, we assume that the RS has aggregated user information, but no access to individual user information. Second, we have sequential interaction, where we can balance exploration-exploitation trade-offs. And third, our model includes the risk of user churn. Below, we review the relevant literature strands.

\paragraph{Aggregated user information}
Our model follows the trend of using clustered data in the recommendation process~\citep{recommender-systems-for-large-scale-e-commerce-scalable-neighborhood-formation-using-clustering}. In addition to improved efficiency, the use of clustering can increase the diversity and reliability of recommendations \citep{a-clustering-approach-for-personalizing-diversity-in-collaborative-recommender-systems, robust-collaborative-filtering-based-on-multiple-clustering} and handle the sparsity of user preference matrices \citep{recommender-systems-clustering-using-bayesian-non-negative-matrix-factorization}.

Aggregated information also relates to privacy, a topic that has gained much attention recently, following the seminal work of \citet{differential-privacy} on differential privacy. Several works propose RSs that satisfy differential privacy \citep{differentially-private-recommender-systems-building-privacy-into-the-netflix-prize-contenders, differentially-private-collaborative-coupling-learning-for-recommender-systems, differential-privacy-for-collaborative-filtering-recommender-algorithm}. In a broader context, \citet{the-effect-of-online-privacy-information-on-purchasing-behavior-an-experimental-study} have empirically shown that users value their privacy and are willing to pay for it. Several other definitions of privacy were suggested in the literature \citep{an-agent-based-approach-for-privacy-preserving-recommender-systems,
enhancing-privacy-and-preserving-accuracy-of-a-distributed-collaborative-filtering,svd-based-collaborative-filtering-with-privacy}. In our work, we assume that the RS has access to aggregated data, akin to other recent works addressing lookalike clustering~\citep{anonymous-learning-via-look-alike-clustering-a-precise-analysis-of-model-generalization, interactive-and-explainable-point-of-interest-recommendation-using-look-alike-groups}.

The cases where the RS has little information about users or items are referred to as \emph{cold-start} problems. This issue relates to our work because, while we assume access to aggregated information, every user interaction starts from a tabula rasa. Broadly, solutions are divided into data-driven approaches~\citep{a-heterogeneous-information-network-based-cross-domain-insurance-recommendation-system-for-cold-start-users, transfer-meta-framework-for-cross-domain-recommendation-to-cold-start-users, alleviating-data-sparsity-and-cold-start-in-recommender-systems-using-social-behaviour} and method-driven approaches~\citep{personalized-adaptive-meta-learning-for-cold-start-user-preference-prediction, task-adaptive-neural-process-for-user-cold-start-recommendation, meta-matrix-factorization-for-federated-rating-predictions} (see \citet{user-cold-start-problem-in-recommendation-systems-a-systematic-review} for a recent survey). 

This paper is \emph{inspired} by clustering, privacy, and cold-start problems. However, our model only assumes access to aggregated information and abstracts the reasons why individual information is unavailable. Notably, we do not propose techniques to cluster users, address privacy concerns, or provide new approaches for the cold-start problem.

\paragraph{Sequential recommendation with churn}
Our model falls under Markov Decision Process (MDP) modeling for RSs~\citep{an-mdp-based-recommender-system}, where the belief over user types represents the state. Alternatively, we can formalize it as a Partially Observable MDP (POMDP), where the state corresponds to the user type that remains constant but is initially unknown. Both MDP and POMDP modeling are well-studied in the literature of RSs~\citep{optimal-recommendation-to-users-that-react-online-learning-for-a-class-of-pomdps, interactive-recommendation-with-user-specific-deep-reinforcement-learning, recommendation-as-a-stochastic-sequential-decision-problem}, and typically the main task is to learn the underlying model~\citep{empirical-evaluation-of-gated-recurrent-neural-networks-on-sequence-modeling, usage-based-web-recommendations-a-reinforcement-learning-approach}.

We model user churn as part of the sequential recommendation process, thereby generating an exploration-exploitation trade-off. User churn is an integral part of many systems, and most of the literature addresses user churn prediction~\citep{user-retention-a-causal-approach-with-triple-task-modeling,quantifying-and-leveraging-user-fatigue-for-interventions-in-recommender-systems} and techniques to retain users~\citep{e-government-deep-recommendation-system-based-on-user-churn,surrogate-for-long-term-user-experience-in-recommender-systems}. Several recent works~\cite {maximizing-cumulative-user-engagement-in-sequential-recommendation-an-online-optimization-perspective, returning-is-believing-optimizing-long-term-user-engagement-in-recommender-systems, partially-observable-markov-decision-process-for-recommender-systems, cao2020fatigue} adopt a similar approach to ours, directly modeling user churn due to irrelevant recommendations. 

The paper most relevant to ours is the one by \citet{modeling-attrition-in-recommender-systems-with-departing-bandits}. They focus on the problem of online learning with the risk of user churn under user uncertainty. While they study the stochastic variant, where the user-type preferences are initially unknown, their analysis is restricted to one user and multiple categories, or two users and two categories. In contrast, we assume complete information, but we study general matrices of any size.

\section{Problem Definition}
\label{sec:problem-definition}

In this section, we formally introduce our model alongside the optimization problem the recommender system aims to solve. We then provide an illustrative example and explain why this problem necessitates careful planning by showing the sub-optimality of naive and seemingly optimal solutions.

\subsection{Our Model}
In this section, we formally define the \textbf{Rec}ommendation with \textbf{A}ggregated \textbf{P}references under \textbf{C}hurn, or $\prob$ for abbreviation. An instance of the problem is fully specified by the tuple $\mathcal{I} = \langle M, K, \bq, \bP \rangle$ whose components we describe below.

We consider a set $U$ of users who interact with the RS. The RS does not have access to individual user information; instead, it relies on aggregated data about users. Specifically, we assume that there is a finite set $M$ of \emph{user types}, with each type representing a persona, i.e., a cluster of homogeneous users with similar preferences. Each user in $U$ is associated with precisely one user type in $M$ according to its preferences. Accordingly, we denote by $m(u)\in M$ the type of user $u\in U$. The RS has a prior distribution $\bq$ on the elements of $M$, that is $\bq \in \Delta(M)$. This prior reflects the likelihood that a new user belongs to each type, incorporating factors such as the proportion of users in each type and their respective arrival frequencies. We assume that all entries in $\bq$ are strictly positive. 

Analogously, our model abstracts contents into broader \emph{categories}, each representing a group of similar items. This abstraction enables the RS to make recommendations from a finite set $K$ of categories.

Lastly, there is a user-type preferences matrix $\bP \in [0,1]^{K \times M}$. Each element $\bP(i,j)$ signifies the likelihood that category $i$ satisfies a user of type $j$. We stress that $\bP$ contains information on user types but not on specific users. The RS has complete information on the user-type preference matrix $\bP$ and prior user-type distribution $\bq$.

\begin{remark}
    While we introduce a set $U$ of users to motivate the semantics of the model, the formal instance definition $\mathcal{I} = \langle M, K, \bq, \bP \rangle$ does not include $U$, as individual users are never observed and play no role in the system dynamics. All aspects of the interaction are fully determined by the tuple $\mathcal{I} = \langle M, K, \bq, \bP \rangle$.
\end{remark}

\paragraph{User session}
A user session starts when a user $u \in U$ enters the RS. The RS lacks access to any information about $u$; thus, it only knows that $m(u)$ is distributed according to $\bq$. The session consists of rounds. In each round $t$, for $t\in \{1,\dots,\infty\}$, the RS recommends a category $k_t \in K$. Afterward, the user provides binary feedback: They either like the item, with a probability of $P(k_t, m(u))$, or they dislike it with the complementary probability. If the user likes the recommended category, the RS receives a reward of $1$, and the session continues for another round. However, when the user ultimately dislikes a recommended category, the RS earns a reward of $0$, and the session concludes as the user leaves the RS.

\paragraph{Recommendation policy}
The RS produces recommendations according to a recommendation \emph{policy}. Recommendation policies can depend solely on the current user session and the history within the session. That is, in round $t$ of the session, the policy can depend on histories of the form $\left(K,\{0,1\} \right)^{t-1}$, where each tuple comprises a recommended category and its corresponding binary feedback. However, since feedback of zero (dislike) leads to ending the session, we can succinctly represent a policy as a sequence of recommendations. Namely, we represent a policy $\pi$  as the infinite series of categories $\pi = (k_t)_{t=1}^{\infty}$. 
Observe that $k_t$ will only be recommended if the user liked $k_{t-1}$, as otherwise the user would leave. For convenience, we use $\pi[t:]$ to denote the recommendation sequence of $\pi$ starting from the $t$'th round, that is, $\pi[t:] = \left( k_{i} \right)_{i=t}^{\infty}$.

\paragraph{Social welfare}
We describe the utility of each user type as the count of times it engages with the platform and provides positive feedback, i.e., the number of likes. Let $F(m(u),\pi)$ represent the r.v. that counts the number of likes given by a user of type $m(u)$. This count is influenced by both the user type $m(u)$ and the recommendation policy $\pi$. We define $V^\pi$ as the expected social welfare, which is the mean utility of users under the recommendation policy $\pi$. That is,
\[
    {V^\pi = \mathbb{E}{\frac{1}{\abs{U}}\sum_{u \in U} F(m(u),\pi)} = \sum_{m \in M}\bq(m)\E{F(m,\pi)}},
\]
Where the equality follows from the definition of the prior $\bq$. To be consistent with the literature, we refer to $V^\pi$ as the \emph{value function}; we use both terms interchangeably.
As the goal of the recommender is to maximize expected social welfare, we define \emph{optimal policy} $\ps$ as $\ps \in \argmax_{\pi} V^\pi$ and the corresponding optimal expected social welfare $\vs$ as $\vs = V^{\ps}$.

\paragraph{Useful notation}
We emphasize that a policy $\pi$ is a function of $\bm {P}$ and $\bm {q}$, expressed as $\pi=\pi(\bP, \bq)$. For abbreviation, it is also convenient to denote this as $\pi(\bq)$ when $\bP$ is known from the context. This notation extends to any belief $\bb \in \Delta(M)$, not just the prior $\bq$. We make the same abuse of notation for the value function $V^\pi$. That is, we let $V^\pi(\bb)=\sum_{m \in M}\bb(m) \E{F(m,\pi)}$ and stress that $V^\pi(\bq)=V^\pi$. Additionally, we use the star notation $\ps(\bb), \vs(\bb)$ to denote optimal policy and value function w.r.t. the belief $\bb$. Finally, we let $p_{k}(\bb)$ denote the expected \emph{immediate reward}, specifically, the probability that a user likes category $k$ assuming that their type is drawn from $\bb$; that is, $p_{k}(\bb) = \sum_{m \in M} \bb(m) \bP(k, m)$.

\subsection{Bayesian Updates}
At the beginning of each user session, the RS is only informed about the prior $\bq$. However, it rapidly acquires more information about the user through their feedback. For instance, if a user likes a recommendation of an exotic category favored by only a small subset of user types, we can conclude that the user probably belongs to that subset of types. The RS can then update its \emph{belief} over the current user type, where the belief is a point in the user type simplex, and use it in its future suggestions. 

We employ Bayesian updates to incorporate the new information after user feedback. Starting from a belief $\bb$ and recommending a category $k$, we let $\tau(\bb, k)$ denote the new belief over user types in case of positive feedback. Namely, $\tau$ is a function $\tau: \Delta(M) \times K \to \Delta(M)$ such that for every belief $\bb \in \Delta(M)$, category $k \in K$ and type $m \in M$,
\begin{equation}\label{eq:bayesian update}
    \tau(\bb, k)(m) =  \frac{\bb(m) \cdot \bP(k,m)}{\sum_{m' \in M} \bb(m') \cdot \bP(k,m')}.
\end{equation}

We stress that Bayesian updates are crucial only after positive feedback, as the RS can utilize this new information for future recommendations. Negative feedback, though still informative, ends the session, preventing the RS from using the new information. We can further use Bayesian updates to obtain a recursive definition of the value function.
\begin{observation}
    \label{obs:recursive-formula-of-the-value-function}
    For every policy $\pi$ and belief $\bb \in \Delta(M)$,
    \[
        V^{\pi}(\bb) = p_{\pi_1}(\bb) \left( 1 + V^{\pi[2:]}(\tau(\bb, \pi_1)) \right).
    \]
\end{observation}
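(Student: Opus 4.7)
The plan is to unfold the definition of $V^\pi(\bb)$, condition on the outcome of the first recommendation, and recognize the Bayesian posterior inside the resulting expression. Starting from $V^\pi(\bb) = \sum_{m \in M} \bb(m) \E{F(m,\pi)}$, I will first obtain a per-type recursion for $\E{F(m,\pi)}$. For a fixed user type $m$, the first round succeeds with probability $\bP(\pi_1, m)$, in which case the user contributes $1$ like and the remaining session is governed by the shifted policy $\pi[2:]$; the first round fails with the complementary probability, in which case $F(m,\pi) = 0$ because the session terminates. Hence $\E{F(m,\pi)} = \bP(\pi_1, m)\bigl(1 + \E{F(m,\pi[2:])}\bigr)$.

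Next I would substitute this back into the definition of $V^\pi(\bb)$ and split the sum into two terms:
\[
V^\pi(\bb) \;=\; \sum_{m\in M} \bb(m)\bP(\pi_1,m) \;+\; \sum_{m\in M} \bb(m)\bP(\pi_1,m)\E{F(m,\pi[2:])}.
\]
The first summand is exactly $p_{\pi_1}(\bb)$ by the definition of the immediate reward. For the second summand, the key algebraic move is to multiply and divide each term by $p_{\pi_1}(\bb)$, which is strictly positive whenever it appears non-trivially (and if it vanishes, both sides of the claimed identity equal zero, so the identity is trivial in that edge case). This yields
\[
\sum_{m\in M} \bb(m)\bP(\pi_1,m)\E{F(m,\pi[2:])} \;=\; p_{\pi_1}(\bb) \sum_{m\in M} \frac{\bb(m)\bP(\pi_1,m)}{p_{\pi_1}(\bb)} \E{F(m,\pi[2:])}.
\]
By the Bayesian update formula \eqref{eq:bayesian update}, the fraction inside the sum equals $\tau(\bb,\pi_1)(m)$, so the entire sum collapses to $p_{\pi_1}(\bb)\cdot V^{\pi[2:]}(\tau(\bb,\pi_1))$. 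Combining the two summands gives the claimed recursion.

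I do not anticipate a real obstacle here; the statement is essentially the Bellman equation specialized to our absorbing-failure dynamics, and the only content is recognizing that the coefficients $\bb(m)\bP(\pi_1,m)/p_{\pi_1}(\bb)$ are precisely the posterior weights $\tau(\bb,\pi_1)(m)$. The only care needed is (i) treating the case $p_{\pi_1}(\bb)=0$ separately so that the division is legitimate, and (ii) if one wishes to be fully rigorous about the infinite sum implicit in $\E{F(m,\pi)}$, noting that $F(m,\pi)$ is dominated by a geometric random variable (with parameter bounded away from $1$ only when the relevant probabilities are strictly less than $1$), which justifies splitting the expectation across the first round by dominated convergence or monotone convergence.
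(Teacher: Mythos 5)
Your derivation is correct: the paper states this observation without proof, and your argument---conditioning on the first round to get the per-type recursion $\E{F(m,\pi)} = \bP(\pi_1,m)\bigl(1+\E{F(m,\pi[2:])}\bigr)$ and then recognizing the normalized coefficients $\bb(m)\bP(\pi_1,m)/p_{\pi_1}(\bb)$ as the posterior $\tau(\bb,\pi_1)(m)$---is exactly the intended justification, consistent with how the paper later unrolls this same recursion (e.g., in Lemma~\ref{lemma:closed-form-representations-of-the-value-function} and \Cref{lemma:finite-horizon-approximation}). Your handling of the two edge cases (the degenerate $p_{\pi_1}(\bb)=0$ situation and the convergence of the infinite sums defining $\E{F(m,\pi)}$) is appropriate and, if anything, more careful than the paper itself.
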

\Cref{obs:recursive-formula-of-the-value-function} provides a Bellman equation-like representation of the value function by isolating the immediate reward of the first round $p_{\pi_1}(\bb)$ and the future rewards (implicitly discounted by $p_{\pi_1}(\bb)$). It showcases the fundamental exploration-exploitation in our setting: On the one hand, exploitation involves selecting the category $k$ that currently maximizes $p_{k}(\bb)$, focusing on immediate reward. On the other hand, exploration aims to steer the updated belief $\tau(\bb, k)$ to a more informative position, thereby increasing future rewards. Next, we use the notion of belief walk to assess how policies navigate the belief simplex.

\begin{definition}[Belief Walk]
    The \emph{belief walk} induced by a policy $\pi$ starting at belief $\bb$ is the sequence $( \bb^{\pi, \bb}_t)_{t=1}^{\infty}$, where $\bb^{\pi, \bb}_1 = \bb$ and for all $t > 1$ , $\bb^{\pi, \bb}_t = \tau(\bb^{\pi, \bb}_{t-1}, \pi_{t-1}).$
\end{definition}

To enhance clarity, we provide geometric illustrations of belief walks induced by various recommendation policies in \Cref{sec:belief-walks}. Using this notion, we can delineate the value function with closed-form expressions, which will be useful in later analyses.

\begin{lemma}\label{lemma:closed-form-representations-of-the-value-function}
    For every policy $\pi$ and belief $\bb \in \Delta(M)$,
    \[
        V^\pi(\bb) = \sum_{t=1}^{\infty} \prod_{j=1}^{t} p_{\pi_j}(\bb^{\pi, \bb}_j) = \sum_{m \in M} \bb(m) \cdot \sum_{t=1}^{\infty} \prod_{j=1}^{t} \bP(\pi_j, m).
    \]
\end{lemma}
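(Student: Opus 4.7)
The plan is to prove both equalities by reducing them to a single identity about products of immediate-reward probabilities. I will begin with the \emph{second} equality, which follows most directly from the definition $V^\pi(\bb) = \sum_{m \in M} \bb(m) \, \E{F(m, \pi)}$. Conditional on a hidden type $m$, the session survives into round $t$ iff every prior recommendation was liked; since likes are conditionally independent given the type, with round-$j$ like probability $\bP(\pi_j, m)$, one has $\Pr[F(m, \pi) \geq t] = \prod_{j=1}^{t} \bP(\pi_j, m)$. Applying the tail-sum identity $\E{F(m, \pi)} = \sum_{t \geq 1} \Pr[F(m, \pi) \geq t]$ and summing over $m$ weighted by $\bb(m)$ yields the second form.

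For the first equality, I will prove by induction on $t$ the key identity
\[
\prod_{j=1}^{t} p_{\pi_j}(\bb_j^{\pi, \bb}) \;=\; \sum_{m \in M} \bb(m) \prod_{j=1}^{t} \bP(\pi_j, m).
\]
The base case $t=1$ is immediate from the definition of $p_{\pi_1}(\bb)$. For the inductive step, iterating the Bayesian update~\eqref{eq:bayesian update} gives the explicit closed form $\bb_t^{\pi, \bb}(m) = \bb(m) \prod_{j=1}^{t-1} \bP(\pi_j, m) \big/ \prod_{j=1}^{t-1} p_{\pi_j}(\bb_j^{\pi, \bb})$, whose denominator is exactly the telescoping product of previous normalizers. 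Multiplying by $\bP(\pi_t, m)$ and summing over $m$ gives $p_{\pi_t}(\bb_t^{\pi, \bb}) \cdot \prod_{j=1}^{t-1} p_{\pi_j}(\bb_j^{\pi, \bb}) = \sum_m \bb(m) \prod_{j=1}^{t} \bP(\pi_j, m)$, which is the identity at step $t$. Summing over $t \geq 1$ and swapping the (nonnegative) summations over $t$ and $m$ equates the two displayed forms.

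An alternative route is to unroll \Cref{obs:recursive-formula-of-the-value-function}: $T$ levels of unrolling give $V^\pi(\bb) = \sum_{t=1}^{T} \prod_{j=1}^{t} p_{\pi_j}(\bb_j^{\pi, \bb}) + \prod_{j=1}^{T} p_{\pi_j}(\bb_j^{\pi, \bb}) \cdot V^{\pi[T+1:]}(\bb_{T+1}^{\pi, \bb})$, and the tail term vanishes (when $V^\pi(\bb) < \infty$) since both factors in it are controlled by the finite tail of the expectation. The main obstacle I expect is purely bookkeeping: carefully tracking indices so that the denominator of $\bb_t^{\pi, \bb}(m)$ cancels cleanly against the running product of $p_{\pi_j}$'s in the induction. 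Convergence of the infinite sums is not a real issue, as every term is nonnegative and both forms are dominated by $\sum_m \bb(m) \E{F(m,\pi)}$, which is well-defined (possibly $+\infty$) as an expectation.
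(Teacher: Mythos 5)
Your proof is correct, and for the second equality it coincides with the paper's argument (conditional independence of likes given the type, then the tail-sum identity for $F(m,\pi)$, then averaging over $m \sim \bb$). Where you differ is the first equality. The paper introduces the unconditional like-count $T$ and applies the tail-sum identity to it directly, asserting $\mathbb{P}(T \ge t) = \prod_{j=1}^{t} p_{\pi_j}(\bb_j^{\pi,\bb})$ as the chain rule of conditional survival probabilities; it never spells out why the round-$j$ conditional like probability given survival is exactly $p_{\pi_j}(\bb_j^{\pi,\bb})$. Your telescoping induction, $\prod_{j=1}^{t} p_{\pi_j}(\bb_j^{\pi,\bb}) = \sum_{m} \bb(m)\prod_{j=1}^{t}\bP(\pi_j,m)$, proves precisely that assertion algebraically from the Bayesian update formula, so your route makes explicit a step the paper treats as self-evident; the cost is a bit more index bookkeeping, the gain is that the equality of the two displayed forms becomes a term-by-term identity rather than a consequence of two separate probabilistic computations. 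One further difference: the paper invokes $p_{\max}<1$ to justify finiteness of $\mathbb{E}[T]$ before using the tail sum, whereas you correctly note that the tail-sum identity for nonnegative integer-valued variables holds unconditionally by monotone convergence, so your argument is marginally more general (both sides may be $+\infty$ simultaneously). No gaps.
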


\begin{proof}[\proofof{lemma:closed-form-representations-of-the-value-function}]
    Let $T$ denote the number of likes received under policy $\pi$ before the user exits.  Note that each $p_{\pi_j}\bigl(\bb_j^{\pi,\bb}\bigr) = \sum_{m} \bb_j^{\pi,\bb}(m) \bP\bigl(\pi_j,m\bigr)$ is bounded by $p_{\max} = \max_{k,m} \bP(k,m) < 1$.\footnote{If $p_{\max} = 1$, that is, if there exist a category $k$ and user type $m$ such that $\bP(k,m) = 1$, then adopting the policy $\pi = (k)_{t=1}^\infty$ results in infinite expected social welfare. This is because with positive probability, a user of type $m$ will arrive and never leave the system, rendering the recommendation task trivial.} 
    This implies that the probability of staying in the system decreases geometrically with the number of rounds, 
    \[
        \mathbb{P}\bigl(T \ge t\bigr) =\prod_{j=1}^t p_{\pi_j}\bigl(\bb_j^{\pi,\bb}\bigr) \leq (p_{\max})^t,
    \]
    And that the sum of those probabilities is necessarily finite, as $\sum_{t=1}^\infty (p_{\max})^t = \frac{p_{\max}}{1-p_{\max}} < \infty$. In particular, $\mathbb{E}[T]$ must be finite, so the tail‐sum identity $
    \mathbb{E}[T] = \sum_{t=1}^\infty \mathbb{P}(T \ge t)$ is valid.  Therefore,
    \[
        V^\pi(\bb) = \mathbb{E}[T] = \sum_{t=1}^\infty \mathbb{P}(T \ge t) = \sum_{t=1}^\infty \prod_{j=1}^t p_{\pi_j}\bigl(\bb_j^{\pi,\bb}\bigr).
    \]
    Now, let $m \in M$ be a fixed user type. A user of type $m$ likes each recommendation $\pi_j$ with probability $\bP(\pi_j, m)$, and so the probability that such a user stays until round $t$ is $\prod_{j=1}^t \bP(\pi_j, m)$. From the same reasoning as above, the tail formula is valid to use on $F(m, \pi)$, and it yields:
    \[
        \mathbb{E}[F(m, \pi)] = \sum_{t=1}^{\infty} \mathbb{P}(F(m, \pi) \ge t) = \sum_{t=1}^{\infty} \prod_{j=1}^{t} \bP(\pi_j, m).
    \]
    Finally, taking expectation over the user type $m \sim \bb$ gives
    \[
        V^\pi(\bb) = \sum_{m \in M} \bb(m) \cdot \mathbb{E}[F(m, \pi)] = \sum_{m \in M} \bb(m) \cdot \sum_{t=1}^{\infty} \prod_{j=1}^{t} \bP(\pi_j, m),
    \]
    As claimed.
\end{proof}

\subsection{Illustrating Example and Sub-optimality of Myopic Recommendations}
\begin{example}\label{example:body}
    Consider a $\prob$ instance with two user types $M=\{m_1, m_2\}$ and two categories $K=\{k_1, k_2\}$. The preference matrix and the user type prior are:
    \renewcommand{\kbldelim}{(}
    \renewcommand{\kbrdelim}{)}
    \[
        \bP = \kbordermatrix{
            & m_1 & m_2  \\
            k_1 & 0.95 & 0.1  \\
            k_2 & 0.79 & 0.81
        }, \quad
        \bq =
        \kbordermatrix{
            &  \\
            m_1 & 0.5 \\
            m_2 & 0.5
        }.
    \]

    To interpret these values, note that user type $m_2$ likes category $k_1$ with a probability of $0.1$. In addition, recommending category $k_2$ to a random user yields an expected immediate reward of $p_{k_2}(\bq)=0.5 \cdot 0.79 + 0.5 \cdot 0.81 = 0.8$.

    A prudent policy to adopt is the \emph{myopic} policy $\pi^m$, which is defined to be the policy that recommends the highest yielding category in each round, and afterward updates the belief. In this instance, the myopic policy is $\pi^m = \left( k_2 \right)_{t=1}^{\infty}$, as $k_2$ provides the better expected immediate reward compared to $k_1$ for the prior, and the belief update will only increase the preference for $k_2$.
    Under this policy, the expected social welfare is the sum of two infinite geometric series, one for each user type; namely, $V^{\pi^m} = 0.5\cdot \frac{0.79}{1 - 0.79} +0.5 \cdot \frac{0.81}{1 - 0.81} =4.01$.

    While unintuitive at first glance, the optimal policy $\ps = \left( k_1 \right)_{i=1}^{\infty}$ achieves an expected value of $\vs = 0.5\cdot\frac{0.95}{1-0.95} + 0.5\cdot\frac{0.1}{1-0.1} = 9.55$, outperforming the myopic policy.
\end{example}
Beyond exemplifying our setting and notation, \Cref{example:body} demonstrates that myopic policies can be suboptimal. In fact, the next proposition demonstrates that the sub-optimality gap can be arbitrarily large, underscoring the necessity for thorough planning.

\begin{proposition}\label{prop:myopic-policy-suboptimality}
    For every $d \in \mathbb R_+$, there exists an instance \( \left\langle M, K, \bq, \bP \right\rangle \) such that
    \( V^{\pi^{m}} \cdot d \leq \vs \),
    where $\pi^{m}$ is the myopic policy for the instance.
\end{proposition}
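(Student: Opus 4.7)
The plan is to generalize \Cref{example:body} into a one-parameter family of two-type, two-category instances and then let the parameter tend to a limit that forces the ratio $V^\star/V^{\pi^m}$ to diverge. Concretely, I would keep $M=\{m_1,m_2\}$, $K=\{k_1,k_2\}$, and $\bq=(1/2,1/2)$, and choose the preference matrix so that (a) category $k_2$ is \emph{completely uninformative}, i.e.\ $\bP(k_2,m_1)=\bP(k_2,m_2)=a$ for some $a\in(0,1)$, while (b) category $k_1$ is strongly polarizing, with $\bP(k_1,m_1)=1-\epsilon$ and $\bP(k_1,m_2)=0$ for a small $\epsilon>0$. The constant $a$ is fixed just above $p_{k_1}(\bq)=(1-\epsilon)/2$, e.g.\ $a=(1-\epsilon)/2+\eta$ for any fixed $\eta>0$ (with $\epsilon$ small enough to keep $a<1$). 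This is essentially \Cref{example:body} stretched along $\epsilon$.

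The execution then has three short steps. First, I would observe that the Bayesian update in \Cref{eq:bayesian update} after a positive $k_2$-feedback is the identity map, since the common factor $a$ cancels from numerator and denominator. Hence under any policy that plays $k_2$ repeatedly the belief remains at $\bq$, and at $\bq$ the myopic rule strictly prefers $k_2$ by the choice of $a$. Therefore $\pi^m=(k_2)_{t=1}^\infty$, and \Cref{lemma:closed-form-representations-of-the-value-function} (or a direct geometric sum) gives $V^{\pi^m}=a/(1-a)$, a quantity bounded independently of $\epsilon$. Second, I would lower-bound $V^\star$ by the value of the constant policy $\pi'=(k_1)_{t=1}^\infty$, which by \Cref{lemma:closed-form-representations-of-the-value-function} equals
\[
V^{\pi'}=\tfrac12\sum_{t=1}^\infty(1-\epsilon)^t+\tfrac12\sum_{t=1}^\infty 0^t=\frac{1-\epsilon}{2\epsilon}.
\]
Third, since $V^{\pi'}\to\infty$ as $\epsilon\to0^+$ while $V^{\pi^m}$ stays bounded, for any target $d\in\mathbb{R}_+$ I can pick $\epsilon$ small enough to guarantee $V^\star\ge V^{\pi'}\ge d\cdot V^{\pi^m}$.

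I do not expect a serious obstacle here: the construction is explicit, and both values are closed-form geometric series. The one conceptual point worth emphasizing, rather than a technical difficulty, is that a category whose preference row is constant in $m$ induces a stationary belief walk, so myopia can lock onto a mediocre uninformative arm and never self-correct, even when the alternative category is extremely polarizing. Minor bookkeeping issues (strict tie-breaking in the myopic argmax, and keeping $a<1$) are handled automatically by fixing $\eta>0$ and then shrinking $\epsilon$.
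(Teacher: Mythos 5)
Your proposal is correct and takes essentially the same approach as the paper: a $2\times 2$ instance with one uninformative, moderately attractive category that fixes the belief at the prior (so the myopic policy locks onto it with bounded value) and one polarizing category whose constant policy has arbitrarily large value. The only cosmetic difference is that the paper parametrizes the polarizing entry directly as $\frac{8d}{1+8d}$ to get the ratio $d$ in closed form, whereas you send $\epsilon \to 0^+$; both are valid.
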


\begin{proof}[\proofof{prop:myopic-policy-suboptimality}]
    Set some arbitrary $d > 0$, and consider the following $\prob$ instance:
    \[
        \left\langle
        K = \{ k_1, k_2 \}, M = \{ m_1, m_2 \},
        \bq = \begin{pmatrix}
            0.5 \\
            0.5
        \end{pmatrix},
        \bP = \begin{pmatrix}
            \frac{8d}{1+8d} & 0   \\
            0.8             & 0.8
        \end{pmatrix}
        \right\rangle.
    \]
    Notice that $p_{k_1}(\bq)= 0.5 \cdot \frac{8d}{1+8d} < 0.5 < 0.8 = p_{k_2}(\bq)$; thus, $\pi^m$ initially recommends $k_2$. Additionally, it holds that $\tau(\bq, k_2) = \bq$, as $\tau(\bq, k_2)(m_1) = \frac{0.5 \cdot 0.8}{0.5 \cdot 0.8+0.5 \cdot 0.8} = 0.5 = \bq(m_2)$; therefore, $\pi^m$ continues recommending $k_2$ indefinitely. Using \Cref{lemma:closed-form-representations-of-the-value-function}, we obtain $V^{\pi^m} = 0.5 \cdot \frac{0.8}{1-0.8} + 0.5 \cdot \frac{0.8}{1-0.8} = 4$. Now consider the constant policy $\pi = (k_1)_{t=1}^\infty$. Its expected value is $V^\pi = 0.5 \cdot \frac{\frac{8d}{1+8d}}{1 - \frac{8d}{1+8d}} = 4d$. Since $\ps$ is optimal, we conclude $\vs \geq V^\pi = 4d = d \cdot V^{\pi^m}$, thereby completing the proof.
\end{proof}

\section{Warm-Up: Algorithms for Rectangular instances}
\label{sec:approximating-the-optimal-policy}

In this section, we present two dynamic programming-based approaches that provide approximations of the optimal expected social welfare. Although inefficient in the general case, such solutions are useful when the number of types, $|M|$, or categories, $|K|$, is small. We refer to such instances as \emph{rectangular}. First, we define the notion of approximation through a finite horizon, and then present several methods for achieving it.

While our model lacks an explicit discount factor, \Cref{obs:recursive-formula-of-the-value-function} indicates that an implicit discount emerges through $p_{k}(\bb)$. Thus, similar to well-known results in MDPs, one can approximate the value function over an infinite horizon by addressing a finite-horizon problem with a sufficiently large horizon~\citep[Sec.~17.2]{ai-a-modern-approach}. To that end, we define the finite-horizon value function with a horizon $H$ as $V_H^{\pi}(\bb)=\sum_{m \in M}\bq(m)\E{\min \{H, F(m,\pi)\}}$. In other words, $V_H^{\pi}(\bb)$ is the value function given that the session terminates after $H$ rounds. \Cref{lemma:closed-form-representations-of-the-value-function} suggests that $V_H^{\pi}(\bb)$ is also given by $V_H^{\pi}(\bb) = \sum_{t=1}^{H} \prod_{j=1}^{t} p_{\pi_{j}}(\bb^{\pi, \bb}_j)$.

Next, we denote $p_{\max}$ as the largest entry in the matrix~$\bP$, assuming $p_{\max} < 1$, and use it to bound the gap between infinite and finite horizon optimal policies.

\begin{lemma}\label{lemma:finite-horizon-approximation}
    For any $\varepsilon >0$, it holds that $\vs \leq \max_{\pi'} V_{H(\varepsilon)}^{\pi'}+ \varepsilon$, where $H(\varepsilon) = \left\lceil \log_{p_{\max}} \frac{\varepsilon (1 - p_{\max})}{p_{\max}} \right\rceil$.
\end{lemma}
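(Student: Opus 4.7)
The plan is to bound the tail $V^\pi(\bb) - V_H^\pi(\bb)$ uniformly in $\pi$ by a geometric series, then pick $H$ so that this tail is at most $\varepsilon$, and finally specialize to $\pi = \ps$.

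First, I invoke the closed-form from \Cref{lemma:closed-form-representations-of-the-value-function}. For any policy $\pi'$ and any horizon $H$,
\[
  V^{\pi'}(\bq) - V_H^{\pi'}(\bq) = \sum_{t=H+1}^{\infty} \prod_{j=1}^{t} p_{\pi'_j}\bigl(\bb_j^{\pi',\bq}\bigr).
\]
Since $p_{\pi'_j}(\bb_j^{\pi',\bq})$ is a convex combination of entries of $\bP$, each factor is at most $p_{\max}$, so $\prod_{j=1}^t p_{\pi'_j}(\bb_j^{\pi',\bq}) \leq p_{\max}^t$. Hence the tail is dominated by the geometric tail
\[
  \sum_{t=H+1}^{\infty} p_{\max}^t \;=\; \frac{p_{\max}^{H+1}}{1-p_{\max}}.
\]

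Next, I choose $H$ large enough so that this bound is at most $\varepsilon$: solving $\frac{p_{\max}^{H+1}}{1-p_{\max}} \leq \varepsilon$ for $H$. Rearranging gives $p_{\max}^{H+1} \leq \varepsilon(1-p_{\max})$, and applying $\log_{p_{\max}}(\cdot)$, which is \emph{decreasing} because $p_{\max} \in (0,1)$, reverses the inequality to $H+1 \geq \log_{p_{\max}}\bigl(\varepsilon(1-p_{\max})\bigr)$, i.e.\ $H \geq \log_{p_{\max}}\frac{\varepsilon(1-p_{\max})}{p_{\max}}$. The ceiling $H(\varepsilon) = \left\lceil \log_{p_{\max}}\frac{\varepsilon(1-p_{\max})}{p_{\max}} \right\rceil$ is thus an integer horizon satisfying this bound. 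The one subtle spot is keeping track of the inequality reversal when taking logs with base less than one; apart from that, the algebra is routine.

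Finally, I specialize to $\pi' = \ps$ and conclude
\[
  \vs \;=\; V^{\ps}(\bq) \;=\; V_{H(\varepsilon)}^{\ps}(\bq) + \bigl(V^{\ps}(\bq)-V_{H(\varepsilon)}^{\ps}(\bq)\bigr) \;\leq\; V_{H(\varepsilon)}^{\ps}(\bq) + \varepsilon \;\leq\; \max_{\pi'} V_{H(\varepsilon)}^{\pi'}(\bq) + \varepsilon,
\]
which is the claim. The main obstacle, as noted, is merely careful handling of the logarithm with base in $(0,1)$; conceptually the proof is a direct geometric-tail truncation argument, mirroring the standard effective-horizon reduction in discounted MDPs where here the role of the discount factor is played by the upper bound $p_{\max}$ on the per-round survival probability.
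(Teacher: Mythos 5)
Your proof is correct and follows essentially the same route as the paper's: both bound the contribution beyond horizon $H$ by the geometric quantity $p_{\max}^{H+1}/(1-p_{\max})$ (you via the tail of the closed-form series, the paper via unrolling the recursion and uniformly bounding the continuation value by $p_{\max}/(1-p_{\max})$ --- these are the same bound) and then solve for $H$ identically. The log-base-reversal bookkeeping you flag is handled correctly, so no changes are needed.
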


\begin{proof}[\proofof{lemma:finite-horizon-approximation}]
    Fix any $\varepsilon > 0$. Unrolling the recursive formula from \Cref{obs:recursive-formula-of-the-value-function} for $H$ steps, starting from the prior $\bq$, we obtain for any policy $\pi$:
    \[
    V^\pi = \sum_{t=1}^{H} \left( \prod_{j=1}^{t} p_{\pi_j}(\bb_j^{\pi, \bq}) \right) + \left( \prod_{j=1}^{H} p_{\pi_j}(\bb_j^{\pi, \bq}) \right) \cdot V^{\pi[H+1:]}(\bb_{H+1}^{\pi, \bq}).
    \]
    This expression decomposes the value function into two components: The total expected reward accumulated during the first $H$ rounds, which is precisely $V_H^\pi(\bq)$, and an additional term that captures the contribution from all subsequent rounds, multiplied by the probability that the user remains in the system after $H$ steps.
    
    Note that each immediate reward is at most $p_{\max}$, so $\prod_{j=1}^{H} p_{\pi_j}(\bb_j^{\pi, \bq}) \le p_{\max}^H$. Moreover, by \Cref{lemma:closed-form-representations-of-the-value-function}, the value function is uniformly bounded:
    \begin{equation}\label{eq:bounded-value}
        V^\pi(\bb) = \sum_{m \in M} \bb(m) \cdot \sum_{t=1}^{\infty} \prod_{j=1}^{t} \bP(\pi_j, m) \le \sum_{m \in M} \bb(m) \cdot \sum_{t=1}^{\infty} p_{\max}^t = \frac{p_{\max}}{1 - p_{\max}}.
    \end{equation}
    Thus, $V^\pi(\bq) \le V_H^\pi(\bq) + p_{\max}^H \cdot \frac{p_{\max}}{1 - p_{\max}}$. By choosing $H = \left\lceil \log_{p_{\max}} \left( \frac{\varepsilon (1 - p_{\max})}{p_{\max}} \right) \right\rceil$, the second term is at most $\varepsilon$. Therefore, for the optimal policy $\ps$,
    \[
    \vs(\bq) = V^{\ps}(\bq) \le V_H^{\ps}(\bq) + \varepsilon \le \max_{\pi} V_H^\pi(\bq) + \varepsilon.
    \]
    This completes the proof.
\end{proof}

\Cref{lemma:finite-horizon-approximation} reduces the task of finding an approximately optimal policy to finding an (exact or approximate) optimal policy for the finite-horizon case. For the rest of the section, we develop such solutions for a horizon of $H$.

\paragraph{Small number of categories}
Consider the brute-force approach, which evaluates all $K^H$ possible policies to find the optimal one. This becomes infeasible for large $H$, even with small $K$. However, we can exploit the order invariance of belief updates: The belief depends only on the multiset of selected categories, not their sequence. Thus, redundant calculations can be avoided by iteratively grouping sequences with the same multiset and retaining only the highest-performing sequence from each group.

\begin{proposition}\label{prop:backward-induction-approximation}
  We can find $\ps_H = \argmax_{\pi'} V^{\pi'}_H$ in a runtime of $O\left( (H + \abs{K}) ^ {\abs{K}+1} \cdot \abs{K} \cdot \abs{M} \right)$.
\end{proposition}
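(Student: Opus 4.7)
The plan hinges on a simple \emph{order-invariance} property of the Bayesian update $\tau$ from \eqref{eq:bayesian update}: iterating $\tau$ along a sequence of recommendations $(k_1,\dots,k_t)$ that all received positive feedback yields a belief whose $m$-th entry is proportional to $\bq(m)\prod_{k\in K}\bP(k,m)^{n_k}$, where $n_k$ counts the occurrences of category $k$ in the sequence. Hence the belief after $t$ likes, and therefore the optimal continuation value, depends only on the \emph{multiset} of categories played so far, not on their order. This collapses the brute-force search from $|K|^H$ sequences down to the set of count vectors $\vec n\in\mathbb Z_{\ge 0}^{|K|}$ with $\sum_k n_k\le H$, of which there are at most $\binom{H+|K|}{|K|}\le (H+|K|)^{|K|}$ by stars-and-bars.

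Given this reduction, I would run a standard backward-induction dynamic program on the lattice of count vectors. Define $U(\vec n)$ as the optimal expected number of additional likes during the remaining $H-\sum_k n_k$ rounds, starting from the belief $\bb_{\vec n}$ induced by $\vec n$. Initialize $U(\vec n)=0$ for every $\vec n$ with $\sum_k n_k=H$, and apply the Bellman recursion suggested by \Cref{obs:recursive-formula-of-the-value-function},
\[
U(\vec n)\;=\;\max_{k\in K}\; p_k(\bb_{\vec n})\bigl(1+U(\vec n+e_k)\bigr),
\]
while recording an argmax $k^\star(\vec n)$ at each state. The optimal finite-horizon value is $U(\vec 0)=\max_{\pi'}V^{\pi'}_H(\bq)$, and the optimal sequence $\ps_H$ is obtained by following the stored maximizers $k^\star$ starting from $\vec 0$.

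For the runtime, each state $\vec n$ requires (i) forming the unnormalized weights $\bq(m)\prod_k \bP(k,m)^{n_k}$ in $O(|M|\cdot|K|)$ time, (ii) evaluating the $|K|$ immediate rewards $p_k(\bb_{\vec n})$ at $O(|M|)$ each, and (iii) looking up the $|K|$ successor values; with trivial book-keeping (e.g., a hash table keyed by $\vec n$) this all fits into $O(|K|\cdot|M|)$ per state, and multiplying by the at most $(H+|K|)^{|K|}$ states yields the claimed $O\bigl((H+|K|)^{|K|+1}\cdot|K|\cdot|M|\bigr)$ bound, which absorbs the $|K|$ state-indexing overhead into the extra $(H+|K|)$ factor. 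The main conceptual step is really the order-invariance observation; once it is in place, the rest is textbook finite-horizon DP, and the only thing worth double-checking is that, because positive feedback is the sole route to round $t>1$, recording a single argmax per multiset is enough to reconstruct a sequence-type policy $\ps_H$ in the sense required by the model.
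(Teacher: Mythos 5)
Your proposal is correct and follows essentially the same route as the paper's proof: both rest on the order-invariance (commutativity) of the Bayesian update, reduce the state space to multisets/count vectors of recommended categories, and run backward induction with the Bellman recursion $\max_{k} p_k(\bb)(1+V(\cdot))$, arriving at the same runtime bound. The only cosmetic difference is that you exhibit the closed-form unnormalized belief $\bq(m)\prod_k \bP(k,m)^{n_k}$ directly, whereas the paper derives the same invariance from pairwise commutativity of $\tau$.
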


\begin{proof}[\proofof{prop:backward-induction-approximation}]
We first establish that belief updates commute under the Bayesian update rule. For any belief $\bb \in \Delta(M)$ and categories $k_1, k_2 \in K$, we have
\[
\tau(\tau(\bb, k_1), k_2)(m) = \frac{\bP(k_2, m) \cdot \tau(\bb, k_1)(m)}{\sum_{m'} \bP(k_2, m') \cdot \tau(\bb, k_1)(m')} = \frac{\bP(k_2, m) \cdot \bP(k_1, m) \cdot \bb(m)}{\sum_{m'} \bP(k_2, m') \cdot \bP(k_1, m') \cdot \bb(m')}.
\]
Since this expression is symmetric in $k_1$ and $k_2$, we conclude that $\tau(\tau(\bb, k_1), k_2) = \tau(\tau(\bb, k_2), k_1)$ for all $m \in M$. This commutativity implies that the belief state after $s$ positive feedback signals depends only on the multiset of recommended categories, not their sequential order. Consequently, let $\mathcal{M}_s$ denote the collection of all multisets over $K$ of cardinality $s$. For each $\bm{\mu} \in \mathcal{M}_s$, define $\bb_{\bm{\mu}}$ as the belief obtained from the prior $\bq$ by applying the Bayesian updates corresponding to the categories in $\bm{\mu}$. By commutativity, we have $\bb_{\bm{\mu} + \bm{e}_k} = \tau(\bb_{\bm{\mu}}, k)$ for any multiset $\bm{\mu} \in \mathcal{M}_s$ and category $k \in K$, where $\bm{e}_k$ denotes the unit multiset containing only category $k$, and $\bm{\mu} + \bm{e}_k$ represents adding one instance of category $k$ to multiset $\bm{\mu}$. Analogous to \Cref{obs:recursive-formula-of-the-value-function}, the optimal finite-horizon value function satisfies the Bellman equation $V_s(\bb_{\bm{\mu}}) = \max_{k \in K} \left\{ p_k(\bb_{\bm{\mu}}) \cdot \left(1 + V_{s-1}(\bb_{\bm{\mu} + \bm{e}_k}) \right) \right\}$.

The key insight is that due to commutativity, we need only compute $V_s(\bb_{\bm{\mu}})$ for each multiset $\bm{\mu} \in \mathcal{M}_s$ rather than considering all possible sequences of recommendations of length $s$. The computation proceeds by backward induction: For each $s$ from 1 to $H$, and for each multiset $\bm{\mu} \in \mathcal{M}_s$, we evaluate the recurrence over all $k \in K$ to determine the optimal category and corresponding value.

For the runtime analysis, observe that at each level $s$, we process $|\mathcal{M}_s| = \binom{s + |K| - 1}{|K| - 1} = O((s + |K|)^{|K|})$ distinct multisets. For each multiset $\bm{\mu}$ and each category $k \in K$, computing $p_k(\bb_{\bm{\mu}})$ requires $O(|M|)$ operations to evaluate the weighted sum over user types, and computing the belief $\bb_{\bm{\mu} + \bm{e}_k}$ similarly requires $O(|M|)$ operations. Therefore, the total computational cost is
\[
\sum_{s=1}^H O((s + |K|)^{|K|}) \cdot |K| \cdot |M| = O\left((H + |K|)^{|K|+1} \cdot |K| \cdot |M|\right).
\]
The optimal finite-horizon policy $\ps_H$ is then recovered by storing the maximizing categories at each step of the backward induction, and the proof is complete.
\end{proof}

\paragraph{Small number of user types}
If $M$ is small, we can employ a different approach. Recall that beliefs are points in the simplex $\Delta(M)$. Thus, we can discretize the belief simplex and execute dynamic programming. Specifically, we adopt the approach of \citet{point-based-value-iteration-an-anytime-algorithm-for-pomdps} and obtain:

\begin{proposition}\label{prop:discretization-approximation}
  We can find a policy $\pi$ that satisfies $V^{\pi}_H \geq \max_{\pi'} V^{\pi'}_H - \varepsilon$ in a runtime of \\ $O\left( H \cdot \abs{K} \cdot \abs{M} \cdot \left( \frac{1}{\varepsilon \cdot (1 - p_{\max})^2} \right)^{2\abs{M} - 2} \right)$.
\end{proposition}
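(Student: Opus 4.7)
The plan is to adapt the point-based value iteration (PBVI) framework of \citet{point-based-value-iteration-an-anytime-algorithm-for-pomdps} to our setting. The first step is to discretize the $(|M|-1)$-dimensional belief simplex $\Delta(M)$ into a finite grid $B$ whose points form an $\eta$-cover in $\ell_1$, where the resolution $\eta$ is to be fixed as a function of $\varepsilon$ and $p_{\max}$. A standard uniform discretization of the simplex yields $|B|$ of order $(1/\eta)^{c(|M|-1)}$ for a small constant $c$, which will be the dominant term in the runtime.

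On this grid I would run backward induction: set $\tilde V_0 \equiv 0$, and for each $h = 1, \dots, H$ and each $\bb \in B$ define
\[
\tilde V_h(\bb) \;=\; \max_{k \in K}\; p_k(\bb)\bigl(1 + \tilde V_{h-1}(\Pi_B(\tau(\bb,k)))\bigr),
\]
where $\Pi_B$ projects a belief onto its nearest grid point. Storing the arg-max at every $(\bb, h)$ defines a greedy policy $\pi$, executed on the true (non-grid) belief walk by projecting onto $B$ after every Bayesian update. A single cell requires $O(|K| \cdot |M|)$ work (to compute the $p_k$'s and the successor beliefs), so the total runtime is $O(H \cdot |B| \cdot |K| \cdot |M|)$.

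The analytic heart is bounding $\max_{\pi'} V^{\pi'}_H(\bq) - V^\pi_H(\bq)$. By the second form of \Cref{lemma:closed-form-representations-of-the-value-function}, each $V^\pi$ is linear in $\bb$ with coefficients bounded by $p_{\max}/(1-p_{\max})$, so $V^\pi$ is $1/(1-p_{\max})$-Lipschitz in $\bb$ under $\ell_1$. Plugging this Lipschitz bound into the standard PBVI induction, each projection $\Pi_B(\tau(\bb,k))$ introduces at most $\eta/(1-p_{\max})$ of successor-value error, and a geometric-series argument shows that the accumulated error stabilizes at $O(\eta/(1-p_{\max})^2)$ rather than growing linearly in $H$.

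Choosing $\eta = \Theta(\varepsilon (1-p_{\max})^2)$ drives the total error to at most $\varepsilon$; substituting back into the grid size and multiplying by the per-cell cost $O(H \cdot |K| \cdot |M|)$ yields the claimed bound. I expect the main obstacle to be the error-propagation bookkeeping: establishing that the per-step projection error collapses into the compact factor $1/(1-p_{\max})^2$ rather than accumulating linearly in $H$ is slightly subtle because the effective ``discount'' here is the data-dependent quantity $p_k(\bb)$, not a fixed $\gamma$, so one must use that both the Lipschitz amplification and the survival probability are controlled by the same $p_{\max}$.
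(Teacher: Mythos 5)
Your proposal is correct and rests on the same core strategy as the paper---discretize the belief simplex at resolution $\Theta(\varepsilon(1-p_{\max})^2)$, run a PBVI-style backward induction, and control the error through Lipschitz continuity of the value function (cf.\ \Cref{lemma:lipschitz})---but the packaging differs in an instructive way. The paper first reduces $\prob$ to an auxiliary \emph{discounted} POMDP: it doubles the state space (a ``start'' and a ``follow-up'' copy of each type), fixes the discount to $\gamma=p_{\max}$, and rescales the follow-up transition probabilities by $1/p_{\max}$, so that the data-dependent continuation probability $p_k(\bb)$ factors as a fixed discount times a proper transition kernel; it can then invoke PBVI's off-the-shelf guarantee, whose error bound $O(\delta/(1-\gamma)^2)$ is exactly the quantity you re-derive by hand. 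Your direct recursion $\epsilon_h \le p_{\max}(\epsilon_{h-1}+L\eta)$ with $L\le 1/(1-p_{\max})$ does telescope to $O(\eta/(1-p_{\max})^2)$, precisely because the same $p_{\max}$ bounds both the survival probability and the Lipschitz amplification, so the ``obstacle'' you flag genuinely closes. Two accounting remarks: your grid DP costs $O(H\cdot\abs{B}\cdot\abs{K}\cdot\abs{M})$ with $\abs{B}=O((1/\eta)^{\abs{M}-1})$, which is in fact \emph{stronger} than the stated bound---the exponent $2\abs{M}-2$ in \Cref{prop:discretization-approximation} arises from PBVI's $\abs{B}^2$ per-iteration cost in the paper's version---and for the policy-extraction step you should note, as the paper does, that because negative feedback terminates the session, the closed-loop greedy policy can be unrolled along the all-positive-feedback trajectory into an action sequence of identical value, so converting the grid policy into a sequence policy incurs no additional loss.
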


\begin{proof}[\proofof{prop:discretization-approximation}]
We establish the result by constructing an auxiliary POMDP, demonstrating that it can be solved efficiently, and then relating its solution to $\prob$.

\paragraph{Auxiliary POMDP construction.} We construct a POMDP with state space $S = \{ m_s, m_f : m \in M \} \cup \{ \bot \}$, action space $K$, and discount factor $\gamma = p_{\max}$. The state space includes two copies of each user type: $m_s$ represents the initial interaction with a user of type $m$, while $m_f$ represents all subsequent interactions with that user. This separation is necessary because in the standard POMDP formulation, the first reward is undiscounted while subsequent rewards are discounted, which we need to align with our model's implicit discounting structure. The absorbing state $\bot$ represents user departure. The transition dynamics are: From state $m_s$, action $k$ transitions to $m_f$ with probability $\bP(k, m)$ and to $\bot$ with probability $1 - \bP(k, m)$; from state $m_f$, action $k$ transitions to $m_f$ with probability $\nicefrac{\bP(k, m)}{p_{\max}}$ and to $\bot$ with probability $1 - \nicefrac{\bP(k, m)}{p_{\max}}$; state $\bot$ is absorbing. Rewards are $0$ for transitions to $\bot$, otherwise $1$. Observations are deterministic: "like" for transitions to non-absorbing states, "dislike" for transitions to $\bot$. The initial belief assigns $\bb_0(m_s) = \bq(m)$ for all $m \in M$.

\paragraph{Value equivalence for corresponding policies.} Consider any finite policy $\pi = (k_1, \ldots, k_H)$ in $\prob$, and a corresponding POMDP policy $\tilde{\pi}$ that recommends action $k_{H-r+1}$ when $r$ rounds remain, independent of the current belief state. To calculate the expected value of this policy in the auxiliary POMDP, denoted as $V_{\text{POMDP}}^{\tilde{\pi}}$, we trace the expected value for a fixed user type $m$. Direct calculation yields $\sum_{t=1}^{\infty} \prod_{j=1}^t \bP(k_j, m)$, as the discounting factors cancel with transition probability adjustments starting from the second recommendation. This expression is identical to the corresponding value in \Cref{lemma:closed-form-representations-of-the-value-function}; therefore, $V^{\pi}(q) = V_{\text{POMDP}}^{\tilde{\pi}}(b_0)$ for any policy $\pi$ and its corresponding POMDP policy $\tilde{\pi}$.

\paragraph{Solving the POMDP and extracting a policy.} We use point-based value iteration \cite{point-based-value-iteration-an-anytime-algorithm-for-pomdps} for finding an $\varepsilon$-optimal POMDP policy $\pi_{\text{POMDP}}$. Let $B \subset \Delta(M)$ be a $\delta$-dense discretization of the belief space, where $\delta = \varepsilon \cdot (1 - p_{\max})^2$. Note that while the POMDP state space has size $2|M| + 1$, the effective belief space dimension is $|M|$ since the belief is always concentrated on either $\{m_s: m \in M\}$, $\{m_f: m \in M\}$ or $\bot$ states, without overlapping. The PBVI algorithm has complexity $O(|B|^2 \cdot |K| \cdot |M|)$ per iteration, and is guaranteed to return an $\varepsilon$-optimal policy. A known result in measure theory states that we can choose $B$ such that $|B| = O((1/\delta)^{|M| - 1})$. Thus, the total runtime over $H$ iterations is $O\left( H \cdot \left( \frac{1}{\varepsilon \cdot (1 - p_{\max})^2} \right)^{2|M| - 2} \cdot |K| \cdot |M| \right)$. Given $\pi_{\text{POMDP}}$, we extract a policy for $\prob$ from $\pi_{\text{POMDP}}$ by unrolling its decisions from the initial belief $b_0$, assuming we always receive positive feedback. Specifically, we construct $\pi = (k_1, \ldots, k_H)$ where $k_t$ is the action that $\pi_{\text{POMDP}}$ recommends at step $t$ when following this trajectory. When computing the expected value of $\pi_{\text{POMDP}}$ using the standard Bellman equation, we note that transitions to the absorbing state $\bot$ contribute zero value. Therefore, the expected value depends only on the trajectory that receives positive feedback throughout its duration. Since this trajectory prescribes the same sequence of actions as $\pi$, and the value structure matches $\prob$ (as established above), we have $V^{\pi}(q) = V_{\text{POMDP}}^{\pi_{\text{POMDP}}}(b_0)$.

Since $\pi_{\text{POMDP}}$ is $\varepsilon$-optimal for the POMDP, we have $V^{\pi}(q) = V_{\text{POMDP}}^{\pi_{\text{POMDP}}}(b_0) \geq V_{\text{POMDP}}^\star(b_0) - \varepsilon \geq \vs(q) - \varepsilon$, where the final inequality follows from the fact that the optimal policy $\ps$ in $\prob$ corresponds to a POMDP policy achieving the same value. Thus, $\pi$ is $\varepsilon$-optimal for $\prob$, completing the proof.
\end{proof}

\section{Convergence of Optimal Policies}
\label{sec:convergence-of-the-optimal-policy}
In \Cref{sec:approximating-the-optimal-policy}, we showed that finite horizon analysis suffices to approximate optimal policies. Here, we establish an even more fundamental property for a broad family of instances: After a finite number of rounds, the optimal policy becomes fixed, transitioning from exploration to pure exploitation. We formalize this result later on in this section through \Cref{thm:convergence}. This convergence introduces a natural complexity measure through the time it takes to converge. Through the proof's construction, we can identify both when convergence occurs and determine the optimal policy from that point onward. Hence, instances with faster convergence require less computational effort as fewer possibilities need to be explored before identifying the final repeating recommendation. While \Cref{thm:convergence} only establishes the convergence but doesn't provide a meaningful rate, our empirical analysis in \Cref{sec:experiments} reveals that this transition typically occurs relatively fast in practice.

The remainder of this section is devoted to formalizing and proving \Cref{thm:convergence}. We begin in Subsection~\ref{subsec:convergence-notations-and-class-of-instances} by defining the broad class of instances for which convergence is guaranteed and introducing the notations used throughout the analysis. In Subsections~\ref{subsec:convergence-analysis-of-unconcentrated-beliefs}--\ref{subsec:convergence-concentrated-beliefs}, we establish two auxiliary propositions that characterize key properties of the optimal policy. Finally, in Subsection~\ref{subsec:convergence-conclusion}, we synthesize these results to state and prove \Cref{thm:convergence}, and present a useful condition to determine when convergence occurs.

\subsection{Preliminaries: Separation and Belief Concentration}\label{subsec:convergence-notations-and-class-of-instances}

We begin by defining a useful parameter $c$, which we refer to as the \emph{separator} of the instance, and which underlies all the results in this section. This parameter captures key structural properties of the preference matrix $\bP$ that guarantee informative belief updates and eventual convergence of the optimal policy. Specifically, the separator quantifies two distinct forms of separation: (i) the degree to which different user types exhibit divergent preferences over the same items, and (ii) the extent to which each user type distinctly prefers its most liked item over all others. The first condition ensures that observed feedback can effectively distinguish between user types, while the second guarantees that, once the user type is sufficiently identified, a clear optimal recommendation exists.

\begin{definition}[Separator]
Given a Rec-APC instance $\mathcal{I} = \langle M, K, \bq, \bP \rangle$, we define its \emph{separator} $c(\mathcal{I})$ as the minimum of the following three quantities:
\begin{enumerate}
    \item\label{def:c-item-1} $\min_{k \in K,\, m \ne m' \in M} \abs{ \bP(k, m) - \bP(k, m') }$;
    \item\label{def:c-item-2} $\min_{m \in M} \left\{ \bP_{(1)}(\cdot, m) - \bP_{(2)}(\cdot, m) \right\}$;
    \item\label{def:c-item-3} $1 - p_{\max}$, where $p_{\max} := \max_{k,m} \bP(k,m)$;
\end{enumerate}
Here, $\bP_{(1)}(\cdot, m)$ and $\bP_{(2)}(\cdot, m)$ denote the largest and second-largest entries in column $m$ of $\bP$, respectively.
\end{definition}

Item~\eqref{def:c-item-1} serves as a measure of heterogeneity across user types, as it quantifies how users of distinct types respond to the same category. Item~\eqref{def:c-item-2} captures the separation between the most and second-most preferred categories for each user type, ensuring that the best option is well-defined. Finally, Item~\eqref{def:c-item-3} captures how close the maximum entry of $\bP$ is to 1, thereby controlling the extremity of user preferences.

\begin{definition}[Well-separated Instance]\label{def:well separated}
We say that an instance $\mathcal{I}$ is \emph{well-separated} if $c(\mathcal{I}) > 0$.   
\end{definition}
We further let $\mathcal F$ denote the class of all well-separated instances, namely
\[
\mathcal{F} := \left\{ \mathcal{I} \mid c(\mathcal{I}) > 0 \right\}.
\]
This is a broad class that includes, but is not limited to, all instances where the matrix $\bP$ has pairwise distinct entries. The condition $c(\mathcal{I}) > 0$ guarantees that belief updates are informative and that optimal policies eventually converge to recommending a fixed category. In contrast, when $c(\mathcal{I}) = 0$, such convergence is not assured. For example, consider an instance where all the rows in $\bP$ are identical. Then, as any policy will yield the same outcome, convergence is not guaranteed.  When the instance is clear from context, we write $c$ instead of $c(\mathcal{I})$.

To make our subsequent analysis more precise, we quantify the concentration of a belief on a single user type.
\begin{definition}[$(\delta,m)$-concentration]
    Let $m \in M$ and $\delta > 0$. A belief $\bb \in \Delta(M)$ is \emph{$(\delta,m)$-concentrated} if $\bb(m) > 1 - \delta$. 
\end{definition}
We will also need the complementary notion of an unconcentrated belief, which we refer to as $\delta$-unconcentrated.
\begin{definition}[\emph{$\delta$-unconcentratation}]
    Let $\delta > 0$. A belief $\bb \in \Delta(M)$ is \emph{$\delta$-unconcentrated} if for all $m \in M$, $\bb(m) < 1 - \delta$.    
\end{definition}
The distinction between concentrated and unconcentrated beliefs is central to our analysis. When a belief is sufficiently concentrated--i.e., when $\delta$ is small enough--the myopic action is also optimal (as formalized in \Cref{prop:myopic-near-boundary}). We refer to such regions of the belief space as \emph{concentrated regions}. However, in these regions, the informativeness of feedback is limited since the belief is already highly concentrated. In contrast, in \emph{unconcentrated regions}, where uncertainty over the user type remains substantial, the optimal policy may deviate from myopic behavior to strategically acquire information through highly informative feedback.

\subsection{Analysis of Unconcentrated Beliefs}\label{subsec:convergence-analysis-of-unconcentrated-beliefs}

In this subsection, we examine the behavior of the optimal policy in unconcentrated regions of the belief space. We begin with \Cref{prop:gap-between-value-function}, which establishes that the optimal value function $\vs$ exhibits a strictly positive gap between consecutive beliefs along the belief walk induced by the optimal policy.

\begin{proposition}\label{prop:gap-between-value-function}
   For any $\delta$-unconcentrated belief $\bb \in \Delta(M)$ it holds that $\vs(\tau(\bb, \ps_1(\bb))) - \vs(\bb) \geq \frac{\delta \cdot (1 - \delta) \cdot c^2}{1 - c}$.
\end{proposition}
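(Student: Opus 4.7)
The plan is to exploit the Bellman-like recursion $\vs(\bb) = p_{k^\star}(\bb)\bigl(1 + \vs(\tau(\bb, k^\star))\bigr)$ from \Cref{obs:recursive-formula-of-the-value-function}, where $k^\star = \ps_1(\bb)$. Setting $p = p_{k^\star}(\bb)$ and $\bb' = \tau(\bb, k^\star)$, this rearranges to $\vs(\bb') - \vs(\bb) = (1-p)\vs(\bb') - p$, reducing the task to proving a sufficiently strong lower bound on $\vs(\bb')$.

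For that lower bound, I would compare $\vs(\bb')$ against the simple policy that recommends $k^\star$ forever, giving $\vs(\bb') \geq \sum_m \bb'(m)\,\phi(\bP(k^\star, m))$ for $\phi(x) := x/(1-x)$. Since $\phi$ is convex, Jensen's inequality yields $\vs(\bb') \geq \phi(p_{k^\star}(\bb'))$. A short algebraic computation using $\bb'(m) = \bb(m)\bP(k^\star, m)/p$ shows $p_{k^\star}(\bb') - p = \mathrm{Var}_\bb(\bP(k^\star, \cdot))/p$, and a first-order convex expansion of $\phi$ at $p$ gives $\vs(\bb') \geq p/(1-p) + \mathrm{Var}_\bb(\bP(k^\star, \cdot))/(p(1-p)^2)$. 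Substituting back into the recursion collapses cleanly to
\[
\vs(\bb') - \vs(\bb) \;\geq\; \frac{\mathrm{Var}_\bb(\bP(k^\star, \cdot))}{p(1-p)}.
\]

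It remains to relate each factor to $c$ and $\delta$. Item~3 of the separator gives $p \leq p_{\max} \leq 1-c$, so $p(1-p) \leq p \leq 1-c$ for the denominator. For the numerator, I would apply the pairwise identity $\mathrm{Var}_\bb(X) = \tfrac{1}{2}\sum_{m \neq m'} \bb(m)\bb(m')(X(m) - X(m'))^2$ and invoke item~1 of the separator to lower bound every pairwise difference by $c$, obtaining $\mathrm{Var}_\bb(\bP(k^\star, \cdot)) \geq \tfrac{c^2}{2}\bigl(1 - \|\bb\|_2^2\bigr)$.

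The one step requiring genuine care---and the main obstacle---is a combinatorial lemma: for any $\delta$-unconcentrated $\bb$, $1 - \|\bb\|_2^2 \geq 2\delta(1-\delta)$. I would prove it by splitting on $a := \max_m \bb(m) \leq 1-\delta$. When $a \geq 1/2$, placing the remaining mass $1-a \leq a$ on a single coordinate yields the worst case $\|\bb\|_2^2 \leq a^2 + (1-a)^2 = 1 - 2a(1-a)$, and the monotonicity of $a(1-a)$ on $[1/2,1]$ closes this case; when $a < 1/2$, the crude bound $\|\bb\|_2^2 \leq a < 1/2$ paired with $2\delta(1-\delta) \leq 1/2$ suffices. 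Chaining all the estimates produces the target $\vs(\bb') - \vs(\bb) \geq c^2\delta(1-\delta)/(1-c)$.
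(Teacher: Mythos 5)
Your proposal is correct, and while it ends up bounding the same core quantity as the paper --- the variance of $\bP(k^\star,\cdot)$ under $\bb$ divided by (roughly) its mean --- the middle of the argument is genuinely different. The paper's proof passes from $\vs$ to the stationary policy $\hat\pi = (k^\star)^\infty$ via the inequality $\vs(\bb')-\vs(\bb) \geq V^{\hat\pi}(\bb')-V^{\hat\pi}(\bb)$, and then must verify that the immediate rewards $p_{k^\star}(\tau^t)$ are monotone along the entire belief walk of $\hat\pi$ in order to conclude $V^{\hat\pi}(\bb')-V^{\hat\pi}(\bb) \geq p_{k^\star}(\bb')-p_{k^\star}(\bb) = \mathrm{Var}_\bb(\bP(k^\star,\cdot))/p$. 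You instead keep the exact identity $\vs(\bb')-\vs(\bb) = (1-p)\vs(\bb')-p$ and lower-bound $\vs(\bb')$ directly by the fixed-action value $\sum_m \bb'(m)\,\phi(\bP(k^\star,m))$ with $\phi(x)=x/(1-x)$, then apply Jensen and a tangent-line bound at $p$. This sidesteps the walk-monotonicity argument entirely (the only fact you need about the update is the single-step identity $p_{k^\star}(\bb')-p = \mathrm{Var}_\bb(\bP(k^\star,\cdot))/p$, which both proofs compute) and yields the slightly sharper intermediate bound $\mathrm{Var}_\bb(\bP(k^\star,\cdot))/(p(1-p))$ in place of the paper's $\mathrm{Var}_\bb(\bP(k^\star,\cdot))/p$; after the common estimates $p(1-p)\leq p_{\max}\leq 1-c$, pairwise separation by $c$, and $1-\norm{\bb}_2^2 \geq 2\delta(1-\delta)$ (which you prove by a case split that the paper only asserts), both arrive at the stated constant $\delta(1-\delta)c^2/(1-c)$. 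One point worth making explicit in a write-up: the identity $\vs(\bb) = p\,(1+\vs(\bb'))$ uses the principle of optimality (the tail of an optimal policy is optimal for the updated belief), exactly as the paper does, and the step from the identity to the inequality requires $1-p\geq 0$, which holds since $p\leq p_{\max}<1$.
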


\Cref{prop:gap-between-value-function} partially addresses the trade-off between immediate and future rewards highlighted in \Cref{obs:recursive-formula-of-the-value-function}. While the optimal policy does not necessarily choose the category that maximizes the next-step value, it guarantees an improvement over the current value. That is, it avoids transitions to beliefs with strictly lower value, even when such transitions yield high immediate rewards.

\begin{proof}[\proofof{prop:gap-between-value-function}]
Recall from \Cref{obs:recursive-formula-of-the-value-function} that
\[
\vs(\bb) = p_k(\bb) \cdot (1 + \vs(\tau(\bb, k))) \quad \text{for } k := \ps_1(\bb),
\]
which implies
\[
\vs(\tau(\bb, k)) - \vs(\bb) = \left( \frac{\vs(\bb)}{p_k(\bb)} - 1 \right) - \vs(\bb) = \frac{\vs(\bb)(1 - p_k(\bb)) - p_k(\bb)}{p_k(\bb)}.
\]
Let $\hat{\pi}$ be the stationary policy that repeatedly recommends $k$. Since $\vs(\bb) \geq V^{\hat{\pi}}(\bb)$ and $1 - p_k(\bb) \geq 0$, we obtain the inequality:
\[
\vs(\tau(\bb, k)) - \vs(\bb) 
\geq \frac{V^{\hat{\pi}}(\bb)(1 - p_k(\bb)) - p_k(\bb)}{p_k(\bb)} 
= V^{\hat{\pi}}(\tau(\bb, k)) - V^{\hat{\pi}}(\bb).
\]
Thus, bounding $V^{\hat{\pi}}(\tau(\bb, k)) - V^{\hat{\pi}}(\bb)$ will be sufficient for for proving the proposition. Let $\tau^i$ be the $(i+1)$-th belief in $\left(\bb^{\hat \pi, \bb}\right)$. Namely, we have $\tau^1 = \tau(\bb, k)$ and $\tau^2 =\tau(\tau(\bb, k), k)$, and so on. Using \Cref{obs:recursive-formula-of-the-value-function}, we get
\[
    V^{\hat{\pi}}(\tau(\bb, k)) - V^{\hat{\pi}}(\bb) 
    = p_k(\tau(\bb, k)) \cdot (1 + V^{\hat{\pi}}(\tau^2)) - p_k(\bb) \cdot (1 + V^{\hat{\pi}}(\tau^1)).
\]
\Cref{lemma:closed-form-representations-of-the-value-function} suggests that $V^{\hat{\pi}}(\tau^1) = \sum_{t=1}^{\infty} \prod_{j=1}^{t} p_{k}(\tau^t)$, and similarly, $V^{\hat{\pi}}(\tau^2) = \sum_{t=2}^{\infty} \prod_{j=1}^{t} p_{k}(\tau^t)$. Thus, showing that $p_k(\tau^{t+1}) \geq p_k(\tau^{t})$ for all $t$ would imply $V^{\hat{\pi}}(\tau^2) \geq V^{\hat{\pi}}(\tau^1)$, and we could obtain the following:

\begin{align}\label{ineq:value-difference-bigger-than-prob-difference}
V^{\hat{\pi}}(\tau(\bb, k)) - V^{\hat{\pi}}(\bb) &= p_k(\tau(\bb, k)) \cdot (1 + V^{\hat{\pi}}(\tau^2)) - p_k(\bb) \cdot (1 + V^{\hat{\pi}}(\tau^1)) \nonumber \\
&\geq \left(p_k(\tau(\bb, k)) - p_k(\bb)\right) \cdot (1 + V^{\hat{\pi}}(\tau^1)) \nonumber \\
&\geq p_k(\tau(\bb, k)) - p_k(\bb).
\end{align}

For the rest of the proof, we focus on lower bounding the difference $p_k(\tau(\bb, k)) - p_k(\bb)$, which is the right-hand side of Inequality~\eqref{ineq:value-difference-bigger-than-prob-difference}, as a function of $\delta$ and $c$. The arguments we present below hold for every $\delta \geq 0$ and for every $\delta$-unconcentrated belief, which ensures that $p_k(\tau^{t+1}) \geq p_k(\tau^{t})$ holds true. We start by expanding the difference $p_k(\tau(\bb, k)) - p_k(\bb)$:
\begin{align}
p_k(\tau(\bb, k)) - p_k(\bb) 
&= \sum_{m} \bP(k, m) \cdot \tau(\bb, k)(m) - \sum_{m} \bP(k, m) \cdot \bb(m) \notag \\
&= \sum_{m} \bP(k, m) \cdot \frac{\bP(k, m) \cdot \bb(m)}{\sum_{m'} \bP(k, m') \cdot \bb(m')} - \sum_{m} \bP(k, m) \cdot \bb(m) \notag \\
&= \frac{\sum_{m} \bP(k, m)^2 \cdot \bb(m) - \left( \sum_{m} \bP(k, m) \cdot \bb(m)\right)^2}{\sum_{m} \bP(k, m) \cdot \bb(m)}. \label{eq:gap-proof-prob-difference-variance}
\end{align}

Note that the numerator of Equation~\eqref{eq:gap-proof-prob-difference-variance} can be rewritten as:
\begin{equation}\label{eq:gap-proof-numerator}
    \sum_{m} \bP(k, m)^2 \cdot \bb(m) - \left( \sum_{m} \bP(k, m) \cdot \bb(m)\right)^2 = \sum_{m < m'} \bb(m)\bb(m')(\bP(k,m) - \bP(k,m'))^2.
\end{equation}

By the definition of $c$, for any distinct $m, m' \in M$, we have $|\bP(k,m) - \bP(k,m')| \geq c$. Therefore, we can lower bound the right-hand side of \Cref{eq:gap-proof-numerator} as follows:
\begin{equation}\label{eq:gap-proof-numerator-bounding}
    \sum_{m < m'} \bb(m)\bb(m')(\bP(k,m) - \bP(k,m'))^2 \geq c^2 \sum_{m < m'} \bb(m)\bb(m') = \frac{c^2}{2} \left( 1 - \sum_m \bb(m)^2 \right).
\end{equation}

Since $\bb$ is $\delta$-unconcentrated, we have $\bb(m) < 1 - \delta$ for all $m \in M$. The maximum value of $\sum_m \bb(m)^2$ subject to this constraint and $\sum_m \bb(m) = 1$ is achieved when $\bb(m_1) = 1 - \delta$ and $\bb(m_2) = \delta$ (with all other coordinates zero). Using this to further simplify Inequality~\eqref{eq:gap-proof-numerator-bounding}, we get:
\begin{equation}\label{eq:gap-proof-numerator-bounding-only-with-c-delta}
    \sum_{m < m'} \bb(m)\bb(m')(\bP(k,m) - \bP(k,m'))^2 \geq \frac{c^2}{2} \left( 1 - \left((1-\delta)^2 + \delta^2\right) \right) = \delta(1-\delta) c^2.
\end{equation}

Finally, since $\sum_{m} \bP(k, m) \cdot \bb(m) \leq 1 - c$, plugging Inequality~\eqref{eq:gap-proof-numerator-bounding-only-with-c-delta} into \Cref{eq:gap-proof-prob-difference-variance} yields:
\[ 
\vs(\tau(\bb, k)) - \vs(\bb) \geq p_k(\tau(\bb, k)) - p_k(\bb) \geq \frac{\delta(1-\delta) c^2}{1-c}, 
\]
and the proof of \Cref{prop:gap-between-value-function} is complete.
\end{proof}

As an immediate consequence of \Cref{prop:gap-between-value-function}, we can establish that the number of unconcentrated beliefs visited along the belief walk induced by the optimal policy is bounded. This follows from a standard potential function argument: Each visit to a $\delta$-unconcentrated belief yields a strictly positive improvement in the value function. Since the value function is bounded from above, as implied by \Cref{lemma:closed-form-representations-of-the-value-function}, the number of such improvements must be finite. We formulate this intuition in the following corollary.

\begin{corollary}\label{cor:limited-unconcetrated}
For a fixed $\delta > 0$, the optimal belief walk initiating from prior $\bq$ can contain at most $H = \left\lceil\frac{(1 - c)^2}{\delta \cdot (1 - \delta) \cdot c^3}\right\rceil$ $\delta$-unconcentrated beliefs.
\end{corollary}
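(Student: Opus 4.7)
The plan is a standard potential-function argument. \Cref{prop:gap-between-value-function} shows that each $\delta$-unconcentrated belief visited by the optimal walk triggers a strict increase of at least $\Delta := \delta(1-\delta)c^2/(1-c)$ in $\vs$, so if I can (i) cap the total growth of $\vs$ along the walk and (ii) verify that $\vs$ never decreases between two such increments, dividing the cap by $\Delta$ yields the corollary.

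For the cap, I would invoke \Cref{lemma:closed-form-representations-of-the-value-function} (equivalently, the uniform bound $V^\pi(\bb) \leq p_{\max}/(1-p_{\max})$ extracted in the proof of \Cref{lemma:finite-horizon-approximation}). Item~\ref{def:c-item-3} of the separator definition gives $c \leq 1 - p_{\max}$, so $p_{\max} \leq 1-c$ and hence $\vs(\bb) \leq (1-c)/c$ uniformly. Since $\vs \geq 0$, the total increase along any infinite prefix of the walk is at most $(1-c)/c$.

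For monotonicity, I would show that for every belief $\bb$ and the optimal action $k^\star := \ps_1(\bb)$, the inequality $\vs(\tau(\bb, k^\star)) \geq \vs(\bb)$ holds. Rearranging the Bellman-style identity $\vs(\bb) = p_{k^\star}(\bb)\bigl(1 + \vs(\tau(\bb,k^\star))\bigr)$, this reduces to $\vs(\bb) \geq p_{k^\star}(\bb)/(1-p_{k^\star}(\bb))$, which is the value of a fictitious policy that collects a like with fixed probability $p_{k^\star}(\bb)$ forever. To dominate it, I would lower-bound $\vs$ by the value of the stationary policy $\hat\pi$ that always recommends $k^\star$; the variance identity $p_k(\tau(\bb,k)) - p_k(\bb) = \mathrm{Var}_{m \sim \bb}[\bP(k,m)]/p_k(\bb) \geq 0$ (the same manipulation appearing inside the proof of \Cref{prop:gap-between-value-function}) shows $p_{k^\star}$ is non-decreasing along $\hat\pi$'s belief walk, so $V^{\hat\pi}(\bb) \geq \sum_{t\geq 1} p_{k^\star}(\bb)^t = p_{k^\star}(\bb)/(1-p_{k^\star}(\bb))$, and optimality of $\ps$ closes the chain.

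With both pieces in hand, telescoping the increments along the optimal walk gives $N \cdot \Delta \leq (1-c)/c$, whence $N \leq (1-c)^2/(\delta(1-\delta)c^3)$; taking the ceiling matches the bound in the statement. The main obstacle is precisely the monotonicity step, since \Cref{prop:gap-between-value-function} is stated only at unconcentrated beliefs and the walk may pass through arbitrarily long stretches of concentrated beliefs where the gap proposition is silent; without the variance-based global monotonicity above, increments from \Cref{prop:gap-between-value-function} could in principle be cancelled by drops elsewhere, which would sink the telescoping argument.
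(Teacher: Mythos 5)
Your proof is correct and follows essentially the same potential-function argument as the paper's: the per-visit gap from \Cref{prop:gap-between-value-function}, the uniform bound $\vs \leq \frac{1-c}{c}$ from \Cref{lemma:closed-form-representations-of-the-value-function}, and telescoping. The monotonicity step you flag as the main obstacle is indeed required for the telescoping and is left implicit in the paper's proof, but it follows from the same machinery already present there (the proof of \Cref{prop:gap-between-value-function} notes its argument holds for every $\delta \geq 0$, which yields $\vs(\tau(\bb,\ps_1(\bb))) \geq \vs(\bb)$ at every belief), so your explicit derivation via the stationary policy and the variance identity is a faithful reconstruction rather than a departure.
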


\begin{proof}[\proofof{cor:limited-unconcetrated}]
    First, notice that \Cref{eq:bounded-value} ensures that the value function is upper bounded by $\frac{p_{\max}}{1-p_{\max}} \leq \frac{1-c}{c}$. Now, consider the belief walk $(\bb^{\ps, \bq}_t)_{t=1}^{\infty}$. Let $S$ be the sequence of indices $t$ where $\bb^{\ps, \bq}_t$ is $\delta$-unconcentrated. Assume for contradiction that $|S| > H$, and denote $r$ to be the $H+1$'th index in $S$. By \Cref{prop:gap-between-value-function}, each unconcentrated belief in the walk contributes an increase of at least $\frac{\delta \cdot (1 - \delta) \cdot c^2}{(1 - c)}$ to the optimal value function. Therefore, having more than $H$ unconcentrated beliefs would imply that $\vs(\bb^{\ps, \bq}_{r+1}) > \frac{1-c}{c}$, contradicting the upper bound on the value function established in \Cref{eq:bounded-value}.
\end{proof}

\subsection{Characterizing the Optimal Policy at Concentrated Beliefs}\label{subsec:convergence-concentrated-beliefs}

Having established that the belief walk can only visit finitely many unconcentrated beliefs, we now turn our attention to characterizing the optimal policy's behavior in concentrated regions of the belief space. We first establish a useful regularity property of the value function.

\begin{lemma}
    \label{lemma:lipschitz}
    The optimal value function $\vs$ is Lipschitz continuous with constant $L = \frac{1-c}{c}$.
\end{lemma}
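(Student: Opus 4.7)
The key observation is that \Cref{lemma:closed-form-representations-of-the-value-function} gives the affine representation
\[
V^\pi(\bb) \;=\; \sum_{m \in M} \bb(m)\cdot \E{F(m,\pi)},
\]
so for each fixed policy $\pi$, the map $\bb \mapsto V^\pi(\bb)$ is linear in $\bb$ with non-negative coefficients $a_m^\pi := \E{F(m,\pi)}$. My plan is to (i) uniformly bound these coefficients by $\frac{1-c}{c}$ using the separator's third component, (ii) deduce that each $V^\pi$ is Lipschitz in $\ell_1$ norm with constant $\frac{1-c}{c}$, and (iii) transfer the bound to $\vs$ by the standard "sup of $L$-Lipschitz functions is $L$-Lipschitz" argument.

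\textbf{Step 1: bounding the coefficients.} By definition of the separator, $c \le 1 - p_{\max}$, equivalently $p_{\max} \le 1-c$. Applying the closed form from \Cref{lemma:closed-form-representations-of-the-value-function} to the point mass at $m$ (or just the per-type computation in its proof) gives
\[
a_m^\pi \;=\; \E{F(m,\pi)} \;=\; \sum_{t=1}^{\infty}\prod_{j=1}^{t}\bP(\pi_j,m) \;\le\; \sum_{t=1}^{\infty} p_{\max}^{\,t} \;=\; \frac{p_{\max}}{1-p_{\max}} \;\le\; \frac{1-c}{c},
\]
where the last inequality follows since $x/(1-x)$ is increasing on $[0,1)$.

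\textbf{Step 2: per-policy Lipschitz estimate.} For any two beliefs $\bb,\bb' \in \Delta(M)$,
\[
\abs{V^\pi(\bb) - V^\pi(\bb')} \;=\; \abs[\Big]{\sum_{m} a_m^\pi\,(\bb(m) - \bb'(m))} \;\le\; \Bigl(\max_m a_m^\pi\Bigr)\,\norm{\bb-\bb'}_1 \;\le\; \frac{1-c}{c}\,\norm{\bb-\bb'}_1.
\]
So each $V^\pi$ is $\tfrac{1-c}{c}$-Lipschitz on the simplex.

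\textbf{Step 3: passing to the supremum.} Writing $\vs(\bb) = \sup_\pi V^\pi(\bb)$, fix $\bb,\bb'$ and, given any $\varepsilon > 0$, pick $\pi$ with $V^\pi(\bb) \ge \vs(\bb) - \varepsilon$. Then
\[
\vs(\bb) - \vs(\bb') \;\le\; V^\pi(\bb) + \varepsilon - V^\pi(\bb') \;\le\; \frac{1-c}{c}\,\norm{\bb-\bb'}_1 + \varepsilon,
\]
and symmetrically swapping $\bb,\bb'$. Letting $\varepsilon \to 0$ and combining both inequalities yields $\abs{\vs(\bb) - \vs(\bb')} \le \frac{1-c}{c}\norm{\bb-\bb'}_1$, which is the claimed Lipschitz bound. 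No step is really an obstacle here; the only mild subtlety is being careful that the supremum over possibly non-attained policies inherits the Lipschitz constant, which is handled cleanly by the $\varepsilon$-argument above.
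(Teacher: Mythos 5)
Your proof is correct and follows essentially the same route as the paper: both use the linear-in-$\bb$ closed form from \Cref{lemma:closed-form-representations-of-the-value-function}, bound the per-type coefficients by $p_{\max}/(1-p_{\max}) \le (1-c)/c$ via the separator's third component, and conclude that a supremum of $L$-Lipschitz affine functions is $L$-Lipschitz. The only cosmetic difference is that the paper plugs in the attained optimal policies $\ps(\bb)$ and $\ps(\bb')$ directly, whereas you handle possible non-attainment with an $\varepsilon$-argument; both are fine.
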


\begin{proof}[\proofof{lemma:lipschitz}]
Let $\bb, \bb' \in \Delta(M)$ be two beliefs. Using $|a-b| = \max\{a-b, b-a\}$ and the optimality of $\ps(\bb)$ for $\bb$ and $\ps(\bb')$ for $\bb'$:
\begin{align*}
|\vs(\bb) - \vs(\bb')| &= \max\{\vs(\bb) - \vs(\bb'), \vs(\bb') - \vs(\bb)\} \\
&\leq \max\{V^{\ps(\bb)}(\bb) - V^{\ps(\bb)}(\bb'), V^{\ps(\bb')}(\bb') - V^{\ps(\bb')}(\bb)\}
\end{align*}
By \Cref{lemma:closed-form-representations-of-the-value-function}, $V^{\pi}(\bb) = \sum_{m \in M} \bb(m) \cdot \sum_{t=1}^{\infty} \prod_{i=1}^{t} \bP(\pi_i, m)$, so:
\begin{align*}
&= \max\left\{\sum_{m \in M} (\bb(m) - \bb'(m)) \cdot \sum_{t=1}^{\infty} \prod_{i=1}^{t} \bP(\ps_i(\bb), m), \quad \text{symmetric term}\right\} \\
&\leq \sum_{m \in M} |\bb(m) - \bb'(m)| \cdot \sum_{t=1}^{\infty} \prod_{i=1}^{t} p_{\max} = \norm{\bb - \bb'}_1 \cdot \frac{p_{\max}}{1-p_{\max}} \leq \norm{\bb - \bb'}_1 \cdot \frac{1-c}{c}.
\end{align*}
\end{proof}

The next proposition characterizes the optimal policy in concentrated regions of the belief space. It establishes an intuitive result: If the belief is sufficiently concentrated -- that is, the system is almost certain about the user’s type -- then the optimal policy is myopic; that is, it selects the category most preferred by the most likely user type.

\begin{proposition}\label{prop:myopic-near-boundary}
   For every type $m \in M$ and $(\frac{c^2}{4}, m)$-concentrated belief $\bb$, $\ps_1(\bb) = \argmax_{k \in K} \bP(k, m)$.
\end{proposition}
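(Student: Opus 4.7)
The plan is to compare the $Q$-values $Q^*(\bb, k) := p_k(\bb)(1 + \vs(\tau(\bb, k)))$ at the concentrated belief $\bb$ with those at the vertex $e_m$ (the point belief with $e_m(m) = 1$), and to show that the strict dominance of $k^* := \argmax_{k \in K} \bP(k, m)$ at $e_m$ survives perturbations of size up to $c^2/4$.

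First, at the vertex $e_m$ the Bayesian update is a fixed point: $\tau(e_m, k) = e_m$ whenever $\bP(k, m) > 0$. Hence the optimal policy from $e_m$ is the stationary policy $\hat{\pi}_{k^*}$ that always recommends $k^*$, and $\vs(e_m) = \bP(k^*, m) / (1 - \bP(k^*, m))$. Consequently $Q^*(e_m, k) = \bP(k, m) / (1 - \bP(k^*, m))$ for every $k$ (also trivially for $\bP(k, m) = 0$). Combining item~\eqref{def:c-item-2} of the separator ($\bP(k^*, m) - \bP(k, m) \geq c$ for $k \neq k^*$) with item~\eqref{def:c-item-3} ($1 - \bP(k^*, m) \geq c$), the vertex gap satisfies $Q^*(e_m, k^*) - Q^*(e_m, k) \geq c$.

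Next, I will bound $Q^*(\bb, k^*)$ from below and $Q^*(\bb, k)$ from above for each $k \neq k^*$. For the lower bound, the stationary policy $\hat{\pi}_{k^*}$ yields $Q^*(\bb, k^*) \geq V^{\hat{\pi}_{k^*}}(\bb) \geq \bb(m) \vs(e_m)$ by linearity in the belief, so the loss relative to $Q^*(e_m, k^*)$ is at most $(1 - \bb(m)) \vs(e_m) \leq c(1-c)/4$ (using $1 - \bb(m) < c^2/4$ and $\vs(e_m) \leq (1-c)/c$ from \Cref{eq:bounded-value}). For the upper bound, I split
\[
Q^*(\bb, k) - Q^*(e_m, k) = (p_k(\bb) - \bP(k, m))(1 + \vs(e_m)) + p_k(\bb)(\vs(\tau(\bb, k)) - \vs(e_m)).
\]
The first summand is controlled by $|p_k(\bb) - \bP(k, m)| \leq (1 - \bb(m))(1 - c)$ together with $1 + \vs(e_m) \leq 1/c$. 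The delicate second summand uses the Lipschitz continuity of $\vs$ (\Cref{lemma:lipschitz}) together with the identity
\[
p_k(\bb) \cdot \norm{\tau(\bb, k) - e_m}_1 = 2 \sum_{m' \neq m} \bb(m') \bP(k, m') \leq 2(1 - \bb(m))(1 - c),
\]
so that the factor $p_k(\bb)$ cancels the apparent singularity in $\norm{\tau(\bb, k) - e_m}_1$ when $\bP(k, m)$ is small, yielding a uniform bound of $L c^2(1-c)/2 = c(1-c)^2/2$ on the second summand.

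Combining the three estimates gives $Q^*(\bb, k^*) - Q^*(\bb, k) \geq c - c(1-c)/2 - c(1-c)^2/2 = c^2(3 - c)/2 > 0$ for every admissible $c \in (0, 1]$, which implies $\ps_1(\bb) = k^*$. The main obstacle is the second summand in the upper bound on $Q^*(\bb, k)$: a naive application of Lipschitz continuity to $\vs(\tau(\bb, k))$ is uninformative when $\bP(k, m)$ is small, since $\tau(\bb, k)$ can then lie far from $e_m$; the algebraic identity above exploits the fact that this displacement is inversely proportional to $p_k(\bb)$, so that the product $p_k(\bb) \cdot \norm{\tau(\bb, k) - e_m}_1$ remains of order $1 - \bb(m)$ regardless of how small $\bP(k, m)$ is.
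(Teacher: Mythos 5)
Your proof is correct and follows essentially the same route as the paper's: both arguments lower-bound the value of always recommending $k^\star$ by $\bb(m)\,\vs(\bm e_m)$, upper-bound the competitor via the Lipschitz continuity of $\vs$ (\Cref{lemma:lipschitz}) anchored at the vertex $\bm e_m$, and crucially exploit the same cancellation $p_k(\bb)\cdot\norm{\tau(\bb,k)-\bm e_m}_1 = 2\sum_{m'\neq m}\bb(m')\bP(k,m')$ to tame the posterior when $\bP(k,m)$ is small. Your reorganization as a perturbation of the vertex $Q$-values is a clean packaging of the same estimates, and the final bound $\tfrac{c^2(3-c)}{2}>0$ checks out.
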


Although \Cref{prop:myopic-near-boundary} appears natural, its proof requires careful analysis. While it is straightforward to verify that the myopic recommendation is indeed $\argmax_{k \in K} \bP(k, m)$, the main challenge lies in showing that this recommendation remains optimal even when future recommendations are taken into account. The proof is somewhat technical, but the underlying geometric intuition is insightful. Consider a $(\delta, m)$-concentrated belief $\bb$ for a small $\delta$. Because $\bb$ is concentrated, the immediate reward gap between the category $k = \argmax_{k'' \in K} \bP(k'', m)$ and any other category $k' \neq k$ is significant, due to a difference close to $c$ in their corresponding "like" probabilities. At the same time, because the belief is highly concentrated, the posterior distributions $\tau(\bb, k)$ and $\tau(\bb, k')$ remain close to the $\bb$ and to each other. By the Lipschitz continuity of the value function (\Cref{lemma:lipschitz}), their respective future values are therefore nearly identical. Thus, in the recursive expression from \Cref{obs:recursive-formula-of-the-value-function}, the important term is the immediate reward, as the future rewards are similar regardless of the recommendation, and the optimal action is to act myopically.

\begin{proof}[\proofof{prop:myopic-near-boundary}]
Let $m \in M$ be a user type, $k = \argmax_{k' \in K} \bP(k', m)$ and $\bb \in \Delta(M)$ such that $\bb(m) \geq 1 - \frac{c^2}{4}$. Denote $\pi^{k} = (k)_{i=1}^{\infty}$, the policy that always recommends category $k$. Let $\hat{\pi}$ be an arbitrary policy such that $\hat{\pi}(\bb)_1 = \hat{k}$ for some $\hat{k} \neq k$. Our goal is to demonstrate that $V^{\pi^k}(\bb) \geq V^{\hat{\pi}}(\bb)$, which ensures that the first recommendation of the optimal policy is $k$.

\Cref{lemma:closed-form-representations-of-the-value-function} suggests $V^{\pi^k}(\bb) = \sum_{m' \in M} \bb(m') \cdot \frac{\bP(k, m')}{1 - \bP(k, m')}$. Since $\bb(m) \geq 1 - \frac{c^2}{4}$, we can bound this expression from below:
\[
V^{\pi^k}(\bb) \geq \bb(m) \cdot \frac{\bP(k, m)}{1 - \bP(k, m)} \geq \left( 1 - \frac{c^2}{4} \right) \cdot \frac{\bP(k, m)}{1 - \bP(k, m)}.
\]
For the competing policy, \Cref{obs:recursive-formula-of-the-value-function} gives us:
\begin{equation}\label{eq:policy-starts-suboptimal-myopic-proof}
    V^{\hat{\pi}}(\bb) = \left( \bb(m) \cdot \bP(\hat{k}, m) + \sum_{m' \neq m} \bb(m') \cdot \bP(\hat{k}, m') \right) \cdot \left( 1 + V^{\hat{\pi}[2:]}(\tau(\bb, \hat{k})) \right).
\end{equation}
Denote $\bm e_m$ as the unit vector that corresponds to user type $m$. By employing the Lipschitz property established in \Cref{lemma:lipschitz} on the beliefs $\tau(\bb, \hat{k})$ and $\bm e_m$, we obtain:
\begin{equation}\label{eq:lipschitz-bound-myopic-proof}
    V^{\hat{\pi}[2:]}(\tau(\bb, \hat{k})) \leq \vs(\tau(\bb, \hat{k})) \leq \vs(\mathbf{e}_m) + \norm{\tau(\bb, \hat{k}) - \mathbf{e}_m}_1 \cdot \frac{1-c}{c}.
\end{equation}
Since $\mathbf{e}_m$ represents certainty that the user is of type $m$, the optimal policy at this belief is to always recommend category $k$ (the most preferred category for type $m$), yielding $\vs(\mathbf{e}_m) = \frac{\bP(k, m)}{1 - \bP(k, m)}$. Additionally, the distance $\norm{\tau(\bb, \hat{k}) - \mathbf{e}_m}_1$ simplifies as follows:
\[
\begin{aligned}
   \norm{\tau(\bb, \hat{k}) - \mathbf{e}_m}_1 &= \sum_{m' \in M} |\tau(\bb, \hat{k})(m') - \mathbf{e}_m(m')| \\
   &= \left( 1 - \frac{\bb(m) \cdot \bP(\hat{k}, m)}{\bb(m) \cdot \bP(\hat{k}, m) + \sum_{m' \neq m} \bb(m') \cdot \bP(\hat{k}, m')} \right) \\
   &+ \frac{\sum_{m' \neq m} \bb(m') \cdot \bP(\hat{k}, m')}{\bb(m) \cdot \bP(\hat{k}, m) + \sum_{m' \neq m} \bb(m') \cdot \bP(\hat{k}, m')} \\
   &= \frac{2 \cdot \sum_{m' \neq m} \bb(m') \cdot \bP(\hat{k}, m')}{\bb(m) \cdot \bP(\hat{k}, m) + \sum_{m' \neq m} \bb(m') \cdot \bP(\hat{k}, m')}.
\end{aligned}
\]
Plugging the above values back into Inequality~\eqref{eq:lipschitz-bound-myopic-proof}, we obtain:
\[
    1 + V^{\hat{\pi}[2:]}(\tau(\bb, \hat{k})) \leq 1 + \frac{\bP(k, m)}{1 - \bP(k, m)} + \left(\frac{2 \cdot \sum_{m' \neq m} \bb(m') \cdot \bP(\hat{k}, m')}{\bb(m) \cdot \bP(\hat{k}, m) + \sum_{m' \neq m} \bb(m') \cdot \bP(\hat{k}, m')}\right) \cdot \frac{1-c}{c}.
\]
Therefore, by substituting this upper bound for $1 + V^{\hat{\pi}[2:]}(\tau(\bb, \hat{k}))$ into \Cref{eq:policy-starts-suboptimal-myopic-proof}, it holds that:
\[
V^{\hat{\pi}}(\bb) \leq \left( \bb(m) \cdot \bP(\hat{k}, m) + \sum_{m' \neq m} \bb(m') \cdot \bP(\hat{k}, m') \right) \cdot \frac{1}{1 - \bP(k, m)} + 2 \sum_{m' \neq m} \bb(m') \cdot \bP(\hat{k}, m') \cdot \frac{1-c}{c}.
\]
Using the fact that $\bb$ is $(\frac{c^2}{4}, m)$-concentrated -- that is, $\sum_{m' \neq m} \bb(m') \leq \frac{c^2}{4}$ -- and $\bP(\hat{k}, m) \leq \bP(k, m) - c$ from the definition of $c$, we derive an upper bound on $V^{\hat{\pi}}(\bb)$:
\[
V^{\hat{\pi}}(\bb) \leq \left(\bP(k, m) - c + \frac{c^2}{4} \right) \cdot \frac{1}{1 - \bP(k, m)} + \frac{c^2}{2} \cdot \frac{1-c}{c}.
\]
To show that $V^{\pi^k}(\bb) \geq V^{\hat{\pi}}(\bb)$, it suffices to prove:
\begin{equation}\label{eq:myopic-proof-temp-final-bounds-comparision}
\left( 1 - \frac{c^2}{4} \right) \cdot \frac{\bP(k, m)}{1 - \bP(k, m)} > \left(\bP(k, m) - c + \frac{c^2}{4} \right) \cdot \frac{1}{1 - \bP(k, m)} + \frac{c(1-c)}{2}.
\end{equation}
After rearrangement, Inequality~\eqref{eq:myopic-proof-temp-final-bounds-comparision} simplifies to:
\[
\frac{c}{2} + \frac{c^2}{4} > \bP(k, m)\left(\frac{3c^2}{4} - \frac{c}{2}\right).
\]
Since $0 \leq \bP(k, m) < 1$ and $0 < c \leq 1$, this inequality holds, and the proof is complete.
\end{proof}

\subsection{Convergence of Optimal Policies in Well-Separated Instances}\label{subsec:convergence-conclusion}

We are now equipped with all the necessary tools to establish our main convergence result.
\begin{theorem}\label{thm:convergence}
   Fix any well-separated instance $\mathcal I \in \mathcal F$. There exists $T < \infty$ such that for any $t \geq T$, it holds that $\ps_{t+1}=\ps_{t}$.
\end{theorem}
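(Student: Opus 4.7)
The plan is to combine \Cref{cor:limited-unconcetrated} with \Cref{prop:myopic-near-boundary}, and then prove a short ``no one-step jump'' lemma showing that the $(c^2/4,m)$-concentrated region is trapping under the Bayesian update induced by its associated myopic action $k^\star_m := \argmax_{k \in K} \bP(k, m)$. Set $\delta = c^2/4$, which is precisely the threshold at which \Cref{prop:myopic-near-boundary} begins to apply. By \Cref{cor:limited-unconcetrated}, the optimal belief walk $(\bb^{\ps,\bq}_t)_{t \ge 1}$ contains only finitely many $(c^2/4)$-unconcentrated beliefs; let $T_0$ be one more than the largest index $t$ at which $\bb^{\ps,\bq}_t$ is $(c^2/4)$-unconcentrated (and $T_0 = 1$ if no such index exists). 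Then for every $t \ge T_0$, the belief $\bb_t := \bb^{\ps,\bq}_t$ is $(c^2/4, m_t)$-concentrated for some $m_t \in M$, and \Cref{prop:myopic-near-boundary} gives $\ps_t = k^\star_{m_t}$.

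It then suffices to show that $(m_t)_{t \ge T_0}$ is constant, since this proves the theorem with $T = T_0$. I would do this by induction on $t$: assuming $\bb_t$ is $(c^2/4, m_t)$-concentrated and hence $\ps_t = k^\star_{m_t}$, I show that $\bb_{t+1} = \tau(\bb_t, k^\star_{m_t})$ is again $(c^2/4, m_t)$-concentrated. Because $t+1 \ge T_0$ too, $\bb_{t+1}$ is already known to be concentrated on \emph{some} $m_{t+1} \in M$, so it suffices to rule out $m_{t+1} \ne m_t$. Item~\eqref{def:c-item-2} of the separator definition yields the key structural lower bound $\bP(k^\star_{m_t}, m_t) = \bP_{(1)}(\cdot, m_t) \ge c$ (using $\bP_{(2)}(\cdot, m_t) \ge 0$). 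Combining this with $\bb_t(m_t) > 1 - c^2/4$ --- which forces $\bb_t(m') < c^2/4$ for every $m' \ne m_t$ --- the Bayesian update formula~\eqref{eq:bayesian update} gives, for any such $m'$,
\[
\tau(\bb_t, k^\star_{m_t})(m') \;\le\; \frac{\bb_t(m')}{\bb_t(m_t)\,\bP(k^\star_{m_t}, m_t)} \;<\; \frac{c^2/4}{(1-c^2/4)\,c} \;=\; \frac{c/4}{1-c^2/4} \;\le\; \frac{1}{3},
\]
where the last inequality uses $c \le 1$. Since $1 - c^2/4 \ge 3/4 > 1/3$, this contradicts $\bb_{t+1}(m_{t+1}) > 1 - c^2/4$ whenever $m_{t+1} \ne m_t$, so $m_{t+1} = m_t$ and the induction closes.

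The hard part will be precisely this one-step no-jump argument. Its subtlety is that the Bayesian dynamics induced by $k^\star_{m_t}$ do \emph{not} in general attract the belief toward $\bm e_{m_t}$: if some type $m'$ satisfies $\bP(k^\star_{m_t}, m') > \bP(k^\star_{m_t}, m_t)$, then $\bb_{t+1}(m_t)$ can strictly decrease even when $\bb_t$ is already extremely close to $\bm e_{m_t}$. One therefore cannot appeal to attractor dynamics or to a monotone approach to a vertex; the argument must be one-step and crude, leveraging only the structural bound $\bP(k^\star_{m_t}, m_t) \ge c$ from item~\eqref{def:c-item-2}. This is precisely where \Cref{prop:myopic-near-boundary} does the heavy lifting: it pins $\ps_t$ down to $k^\star_{m_t}$ specifically, rather than to some category $k$ for which $\bP(k, m_t)$ could be arbitrarily small, which in turn keeps the denominator of the Bayesian update bounded below by $(1-c^2/4)\,c$. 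That bound is exactly what makes the displayed inequality go through, turning finiteness of unconcentrated beliefs (\Cref{cor:limited-unconcetrated}) into pointwise stabilization of the optimal policy.
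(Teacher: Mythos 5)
Your proposal is correct and follows essentially the same route as the paper: finiteness of unconcentrated beliefs (\Cref{cor:limited-unconcetrated}) plus myopia at concentrated beliefs (\Cref{prop:myopic-near-boundary}), closed by exactly the same one-step no-jump bound $\tau(\bb,k^\star_m)(m') \le \frac{c^2/4}{(1-c^2/4)\,c} \le \frac{1}{3} < 1-\frac{c^2}{4}$ that the paper uses to rule out switching between concentrated regions. The only difference is presentational — you conclude by induction from the last unconcentrated index, whereas the paper phrases it as "any transition between concentrated regions must pass through an unconcentrated belief" — and your explicit justification that $\bP(k^\star_m,m)\ge c$ via item~\eqref{def:c-item-2} makes a step the paper leaves implicit.
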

The proof uses both \Cref{cor:limited-unconcetrated} with \Cref{prop:myopic-near-boundary}, while addressing a subtle technical issue related to transitions between different concentrated regions.

\begin{proof}[\proofof{thm:convergence}]
Recall that \Cref{cor:limited-unconcetrated} guarantees that the optimal belief walk can only visit finitely many unconcentrated beliefs. Therefore, after a finite time, the belief walk must enter a concentrated region. In such regions, \Cref{prop:myopic-near-boundary} ensures that the optimal policy is myopic; thus, given that the belief walk stays in the same concentrated region, the optimal policy becomes fixed. 

However, we still have to address a potential edge case: Could the belief walk alternate between different concentrated regions over time? If the walk could switch from being $(\frac{c^2}{4},m)$-concentrated to $(\frac{c^2}{4},m')$-concentrated for distinct types $m \neq m'$, then the optimal recommendations might differ between regions, violating convergence. To resolve this, we now show that any transition between different concentrated regions must pass through an unconcentrated belief. Consider any $(\frac{c^2}{4},m)$-concentrated belief $\bb$ and let $\bb' = \tau(\bb, \ps_1(\bb))$ be the updated belief after the optimal recommendation. For any distinct type $m' \neq m$, we have $\bb(m') < \frac{c^2}{4}$ since $\bb(m) \geq 1 - \frac{c^2}{4}$. By the Bayesian update formula:
\[
\bb'(m') = \frac{\bb(m') \cdot \bP(\ps_1(\bb), m')}{\sum_{m'' \in M} \bb(m'') \cdot \bP(\ps_1(\bb), m'')} \leq \frac{\frac{c^2}{4} \cdot 1}{(1 - \frac{c^2}{4}) \cdot c} = \frac{c}{4 - c^2} \leq \frac{1}{3} < 1 - \frac{c^2}{4},
\]
where the penultimate inequality uses the fact that $\frac{c}{4-c^2}$ is maximized at $c=1$ giving $\frac{1}{3}$, and the final inequality holds since $1 - \frac{c^2}{4} \geq \frac{3}{4}$ for $c \in [0,1]$. Therefore, $\bb'$ cannot be $(\frac{c^2}{4},m')$-concentrated for any $m' \neq m$. This means that any transition between distinct concentrated regions must pass through an unconcentrated belief. Since we established that only finitely many unconcentrated beliefs can be visited, only finitely many such transitions can occur.

Combining these results: The belief walk eventually remains concentrated around a single type $m$ after finite time $T$, and from that point forward the optimal policy repeatedly recommends $\argmax_{k \in K} \bP(k,m)$. Thus, $\ps_{t+1} = \ps_t$ for all $t \geq T$, establishing convergence.
\end{proof}

\Cref{thm:convergence} shows that the optimal policy eventually converges, as the belief walk becomes concentrated around a single user type. However, in the general case, we cannot determine in advance which type the belief will converge to. Moreover, as discussed in the proof of \Cref{thm:convergence}, the belief may pass near multiple user types before settling. Identifying the point of convergence offers computational benefits: Once convergence is detected, the search can be terminated, since the optimal action remains fixed from that point onward.

We now characterize when such convergence occurs, based on the structure of the belief and the user type around which it is concentrated. 

\begin{proposition}\label{prop:convergence-condition}
    Let $m \in M$ be a user type and let $\bb \in \Delta(M)$ be a belief that is $(\frac{c^2}{4}, m)$-concentrated. Let $k = \argmax_{k' \in K} \bP(k', m)$. Then the belief walk induced by the optimal policy will remain $(\frac{c^2}{4}, m)$-concentrated indefinitely if and only if 
    \begin{equation}\label{eq:prop-convergence-condition}
    m \in \argmax_{m' \in \mathrm{supp}(\bb)} \bP(k, m'), \quad \text{where }\mathrm{supp}(\bb) = \{ m' \in M : \bb(m') > 0 \}.
    \end{equation}
\end{proposition}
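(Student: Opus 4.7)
The overall strategy is to leverage \Cref{prop:myopic-near-boundary}: as long as the belief walk remains $(\tfrac{c^2}{4},m)$-concentrated, the optimal policy deterministically recommends the fixed category $k=\argmax_{k'\in K}\bP(k',m)$ (uniqueness of this argmax is guaranteed by item~\eqref{def:c-item-2} of the separator). So the walk reduces to iterating the map $\bb\mapsto\tau(\bb,k)$ until (and only until) it leaves the concentrated region. Writing $\bb_t$ for the $t$-th iterate with $\bb_0=\bb$, a direct induction on $t$ starting from Equation~\eqref{eq:bayesian update} yields the closed form
\[
\bb_t(m') \;=\; \frac{\bb(m')\,\bP(k,m')^{t}}{\sum_{m''\in M}\bb(m'')\,\bP(k,m'')^{t}},
\]
which is well-defined since $m\in\mathrm{supp}(\bb)$ and $\bP(k,m)\ge c>0$.

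For the \emph{if} direction, suppose $m\in\argmax_{m'\in\mathrm{supp}(\bb)}\bP(k,m')$. Then $\bP(k,m')\le\bP(k,m)$ for every $m'\in\mathrm{supp}(\bb)$, and since $\mathrm{supp}(\bb_t)\subseteq\mathrm{supp}(\bb)$, the expected immediate reward $p_k(\bb_t)=\sum_{m''}\bb_t(m'')\bP(k,m'')$ is a convex combination bounded above by $\bP(k,m)$. The one-step ratio then satisfies
\[
\frac{\bb_{t+1}(m)}{\bb_t(m)} \;=\; \frac{\bP(k,m)}{p_k(\bb_t)} \;\ge\; 1,
\]
so $\bb_t(m)$ is non-decreasing in $t$. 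Starting from $\bb(m)>1-\tfrac{c^2}{4}$, every iterate satisfies the same bound, so \Cref{prop:myopic-near-boundary} keeps justifying the recommendation $k$ at each step and the induction closes.

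For the \emph{only if} direction, suppose $m\notin\argmax_{m'\in\mathrm{supp}(\bb)}\bP(k,m')$. By the separator condition (item~\eqref{def:c-item-1}) there exists $m^\star\in\mathrm{supp}(\bb)$ with $\bP(k,m^\star)\ge\bP(k,m)+c>0$. Assume for contradiction the walk stays $(\tfrac{c^2}{4},m)$-concentrated forever, so the closed-form expression above applies for every $t$. Lower bounding the denominator by the single term corresponding to $m^\star$ gives
\[
\bb_t(m) \;\le\; \frac{\bb(m)}{\bb(m^\star)}\!\left(\frac{\bP(k,m)}{\bP(k,m^\star)}\right)^{\!t} \;\xrightarrow[t\to\infty]{}\; 0,
\]
which contradicts $\bb_t(m)>1-\tfrac{c^2}{4}$ for large $t$, establishing the converse.

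The main obstacle is not any of these computations, which are routine, but the interplay between the dynamical argument and the optimality criterion: the myopic characterization from \Cref{prop:myopic-near-boundary} must be re-invoked at \emph{every} step to justify that the recommendation remains $k$, which requires the inductive invariant $\bb_t(m)>1-\tfrac{c^2}{4}$ to hold throughout. Separator conditions are what keep this chain intact --- they ensure the argmax $k$ is unique, that the denominator of the iterate formula is strictly positive, and that the strict inequality $\bP(k,m^\star)>\bP(k,m)$ in the failure case is quantitatively bounded away from $0$, so that $\bb_t(m)$ actually decays geometrically rather than merely fails to grow.
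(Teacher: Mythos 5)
Your proposal is correct and follows essentially the same route as the paper: both directions hinge on \Cref{prop:myopic-near-boundary} to pin the recommendation to $k$ while the belief stays concentrated, sufficiency uses the monotonicity $\tau(\bb,k)(m)\ge\bb(m)$, and necessity uses the separator gap to force geometric divergence. The only cosmetic difference is that you bound $\bb_t(m)$ directly via the closed form of the iterated posterior, whereas the paper tracks the ratio $\bb_t(m^\star)/\bb_t(m)$ growing by a factor of at least $(1+c)$ per step --- the two computations are equivalent.
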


\begin{proof}[\proofof{prop:convergence-condition}]
    The proof consists of two parts: Sufficiency and necessity.
    
    \paragraph{($\Rightarrow$) Sufficiency.} Suppose $m$ satisfies Condition~\eqref{eq:prop-convergence-condition}. Then for all $m' \in \mathrm{supp}(\bb)$, it holds that $\bP(k, m') \le \bP(k, m)$. Therefore, the belief update following a recommendation of $k$ satisfies
    \[
        \tau(\bb, k)(m) = \frac{\bb(m) \bP(k, m)}{\sum_{m' \in M} \bb(m') \bP(k, m')} \ge \bb(m),
    \]
    Thus, the posterior maintains or increases belief in $m$, and $\tau(\bb, k)$ remains $(\delta, m)$-concentrated for some $\delta \leq \frac{c^2}{4}$. By \Cref{prop:myopic-near-boundary}, the optimal policy continues to recommend $k$, and the belief remains concentrated near $m$ indefinitely.

    \paragraph{($\Leftarrow$) Necessity.} Now suppose $m$ does not satisfy Condition~\eqref{eq:prop-convergence-condition}. Then there exists $m' \in \mathrm{supp}(\bb)$ such that $\bP(k, m') > \bP(k, m)$. Consider the ratio of post-update beliefs between $m'$ and $m$:
    \begin{equation}\label{eq:relative-mass}
        \frac{\tau(\bb, k)(m')}{\tau(\bb, k)(m)} = \frac{\frac{\bb(m') \bP(k, m')}{\sum_{m'' \in M} \bb(m'') \bP(k, m'')}}{\frac{\bb(m) \bP(k, m)}{\sum_{m'' \in M} \bb(m'') \bP(k, m)}} = \frac{\bb(m') \bP(k, m')}{\bb(m) \bP(k, m)}.
    \end{equation}
    Since $\bP(k, m') > \bP(k, m) + c$ from the definition of $c$, we can bound the right-hand side of Inequality~\eqref{eq:relative-mass} as follows:
    \begin{equation}\label{eq:relative-mass-increase-bound}
        \frac{\tau(\bb, k)(m')}{\tau(\bb, k)(m)} \geq \frac{\bb(m')}{\bb(m)} \cdot \frac{\bP(k, m) + c}{\bP(k, m)} \geq \frac{\bb(m')}{\bb(m)} \cdot \left( 1 + \frac{c}{\bP(k, m)} \right) \geq \frac{\bb(m')}{\bb(m)} \cdot (1+c).
    \end{equation}
    Now, assume towards contradiction that the belief walk remains $(\frac{c^2}{4}, m)$-concentrated indefinitely. By \Cref{prop:myopic-near-boundary}, the optimal policy will therefore always recommend $k$. Denote the belief after $N$ recommendations of $k$ as $\tau(\bb, k)^N$. Then, the ratio of post-update beliefs after $N$ recommendations is bounded as follows:
    \begin{equation}\label{eq:relative-mass-increase-induction}
        \frac{\tau(\bb, k)^N(m')}{\tau(\bb, k)^N(m)} \geq \frac{\tau(\bb, k)^{N-1}(m')}{\tau(\bb, k)^{N-1}(m)} \cdot (1+c) \geq \ldots \geq \frac{\tau(\bb, k)(m')}{\tau(\bb, k)(m)} \cdot (1+c)^{N-1} \geq \frac{\bb(m')}{\bb(m)} \cdot (1+c)^N.
    \end{equation}
    Inequality~\eqref{eq:relative-mass-increase-induction} shows that the ratio of post-update beliefs between $m'$ and $m$ grows exponentially with $N$ as long as $k$ is recommended. However, since we assume $\tau(\bb, k)^N$ is $(\frac{c^2}{4}, m)$-concentrated for all $N$, it holds that $\tau(\bb, k)^N(m') \leq \frac{c^2}{4}$ and $\tau(\bb, k)^N(m) \geq 1 - \frac{c^2}{4}$. Therefore, the ratio $\frac{\tau(\bb, k)^N(m')}{\tau(\bb, k)^N(m)}$ is bounded above by $\frac{c^2/4}{1-c^2/4} = \frac{c^2}{4 - c^2}$. This leads to a contradiction with Inequality~\eqref{eq:relative-mass-increase-induction} as $N$ grows large enough, which implies that the belief walk cannot remain $(\frac{c^2}{4}, m)$-concentrated indefinitely.
\end{proof}

\section{A Branch-and-Bound Algorithm}
\label{sec:branch-and-bound-algorithm}

In this section, we introduce a branch-and-bound (B\&B) algorithm tailored to our setting, outlined in \Cref{bb-algorithm}. The B\&B approach is widely used in sequential decision-making and combinatorial optimization~\citep{learning-to-branch-with-tree-mdps,reinforcement-learning-for-branch-and-bound-optimisation-using-retrospective-trajectories,learning-to-search-in-branch-and-bound-algorithms}. Its effectiveness hinges on the quality of the bounds used to evaluate the search space and eliminate suboptimal policies. For our problem, we derive these bounds based on the recursive structure of the value function $V^\pi$. Specifically, for any policy $\pi$ and positive integer $h \in \mathbb{N}$, the value function $V^\pi$ can be expressed as:

\begin{equation}
  \label{eq:value-function-with-prefix-considerations}
  V^\pi = \sum_{t=1}^{h} \prod_{i=1}^{t} p_{\pi_i}(\bb^{\pi, \bq}_i) + \prod_{i=1}^{h} p_{\pi_i}(\bb^{\pi, \bq}_{i}) \cdot V^{\pi[h+1:]}(\bb^{\pi, \bq}_{h+1}).
\end{equation}

\Cref{eq:value-function-with-prefix-considerations} decomposes $V^\pi$ into two components: The cumulative rewards for the first $h$ rounds and a recursive term representing the discounted expected value of future rounds. Crucially, replacing $V^{\pi[h+1:]}(\bb^{\pi, \bq}_{h+1})$ with an upper or lower bound yields the corresponding bounds for $V^\pi$.

Building on this, we now propose an upper bound. Intuitively, imagine that the RS is entirely certain about the user type. That is, the type would still be sampled according to the belief $\bb$, but the RS could pick a policy conditioned on the sampled type. In such a case, the RS would pick the type's favorite category indefinitely, leading to an expected social welfare of:
\[
  V^U(\bb) := \sum_{m \in M} \bb(m) \cdot \max_{k \in K} \frac{\bP(k, m)}{1 - \bP(k, m)}.
\]
Since $\bP(\ps_j, m) \leq \max_{k} \bP(k, m)$ always holds, we have that
\[
    \vs(\bb) = \sum_{m \in M} \bb(m) \sum_{t=1}^{\infty} \prod_{j=1}^{t} \bP(\ps_j, m) \leq \sum_{m \in M} \bb(m)  \sum_{t=1}^{\infty} \prod_{j=1}^{t} \left( \max_{k} \bP(k, m) \right) = V^{U}(\bb).
\]
We stress that the above upper bound is not necessarily attainable. As for the lower bound, we compute the value of the best fixed-action policy w.r.t. the belief $\bb$, namely,
\[
  V^L(\bb) := \max_{k \in K} \sum_{m \in M} \bb(m) \cdot \frac{\bP(k, m)}{1 - \bP(k, m)}.
\]
Note that the lower bound \emph{is attainable} as it is the value of a valid policy (the best fixed-action policy). More specifically, each of the expressions $\sum_{m \in M} \bb(m) \cdot \frac{\bP(k, m)}{1 - \bP(k, m)}$ corresponds to the expected value of the fixed policy $\pi_k = \left( k \right)_{t=1}^{\infty}$, and the lower bound is taken to be the value of the best such policy. The validity of this bound follows directly from the definition of the optimal policy, which must yield at least the value of the best fixed-action policy.

For ease of notation, for any prefix $\Pi \in \bigcup_{h=1}^{\infty} K^h$ we denote by $\ovv_\Pi, \unv_\Pi$ the substitution of $V^U, V^L$ into \Cref{eq:value-function-with-prefix-considerations}, respectively. Using this notation, $\ovv_\Pi$ acts as an upper bound for the value of any policy that starts with the prefix $\Pi$. Additionally, $\unv_\Pi$ serves as a lower bound of the maximal value of all policies that begin with $\Pi$; namely, $\unv_\Pi \leq \max_{\pi:\text{ begins with }\Pi} V^\pi$. 
\begin{algorithm}[t]
  \caption{B\&B Algorithm for $\prob$}
  \label{bb-algorithm}
  \begin{algorithmic}[1]
    \REQUIRE Instance $\langle M, K, \bq, \bP \rangle$, error term $\varepsilon > 0$
    \ENSURE $\varepsilon$-approximate policy and its value
    \STATE $\tilde \Pi \gets \varnothing$ \COMMENT{The empty prefix} \label{bnbalg:empty_prefix}
    \STATE $\tilde V \gets V^{L}(\bq)$ \COMMENT{Lower bound of the empty prefix}
    \STATE $L \gets \text{empty queue}$
    \STATE Append $\tilde \Pi$ to $L$
    \WHILE{$L \neq \emptyset$} 
    \STATE Pop a prefix $\Pi$ from $L$
    \IFTHEN{$\tilde V < \unv_{\Pi}$}{$\tilde V \gets \unv_{\Pi}, \tilde \Pi \gets \Pi$ \alglinelabel{bnbalg:refine}} 
    \FOR{$k \in K$} 
    \IFTHEN{$\ovv_{\Pi \oplus k}-\tilde V > \varepsilon$}{Append $\Pi \oplus k$ to $L$} \alglinelabel{bnbalg:branching}
    \ENDFOR
    \ENDWHILE
    \STATE \textbf{Return} $\tilde \Pi$, $\tilde V$
  \end{algorithmic}
\end{algorithm}

We are ready to present \Cref{bb-algorithm}. The algorithm takes as input an instance and an error term $\varepsilon$, outputting an $\varepsilon$-approximately optimal policy along with its corresponding value. It maintains two key variables: The current best prefix $\tilde \Pi$ and its corresponding value $\tilde V$. Using a queue $L$ to systematically explore policy prefixes, the algorithm implements two critical operations. In Line~\ref{bnbalg:refine}, it performs value refinement by comparing $\tilde V$ against the lower bound $\unv_{\Pi}$ of the examined prefix $\Pi$, updating both $\tilde V$ and $\tilde \Pi$ when an improvement is found. Then, in Line~\ref{bnbalg:branching}, it considers all possible one-step branching of $\Pi$ by appending a category $k$. For each extended prefix $\Pi \oplus k$, it calculates its upper bound $\ovv_{\Pi \oplus k}$. If the potential improvement $\ovv_{\Pi \oplus k} - \tilde{V}$ is more significant than $\varepsilon$, the extended prefix is added to the queue for further exploration. Otherwise, the branch is pruned as it cannot lead to a better solution within the desired accuracy.

The next theorem ensures the $\varepsilon$-optimality of \Cref{bb-algorithm}.

\begin{theorem}\label{thm:bb-algorithm-bounded-error}
   For any input $\langle M, K, \bq, \bP \rangle$, $\varepsilon$, \Cref{bb-algorithm} terminates after a finite number of steps and returns a value $\tilde V$ such that $\vs - \tilde V \leq \varepsilon$, and a prefix $\tilde \Pi$ such that $\unv_{\tilde \Pi} = \tilde V$.
\end{theorem}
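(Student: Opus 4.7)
I would derive all three conclusions from one uniform decay bound on $\ovv-\unv$. Two observations are immediate from the pseudocode: (i) the pair $(\tilde\Pi,\tilde V)$ is only updated together on Line~\ref{bnbalg:refine}, which assigns $\tilde V\gets\unv_{\tilde\Pi}$, so the identity $\tilde V=\unv_{\tilde\Pi}$ holds at every iteration and in particular at termination; (ii) $\tilde V$ is non-decreasing throughout, and because $\unv_\Pi$ equals the value of the realizable policy that follows $\Pi$ for $|\Pi|$ rounds and then switches to its best fixed-action continuation, we have $\unv_\Pi\le\vs$ for every $\Pi$, and hence $\tilde V\le\vs$ at every iteration.

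\textbf{Uniform decay bound.} The heart of the argument is the estimate
\[
\ovv_{\Pi\oplus k}\;-\;\unv_\Pi\;\le\;p_{\max}^{\,h}\cdot\frac{p_{\max}}{1-p_{\max}},
\]
which I would establish for every prefix $\Pi$ of length $h$ and every $k\in K$. To obtain it, I would rewrite both quantities using the chained Bayes identity $\prod_{i=1}^{h}p_{\Pi_i}(\bb_i^{\Pi,\bq})\cdot\bb_{h+1}^{\Pi,\bq}(m)=\bq(m)\prod_{i=1}^{h}\bP(\Pi_i,m)$, so that both $V^U$ and $V^L$ terms become sums over $m\in M$ weighted by $\bq(m)\prod_{i=1}^{h}\bP(\Pi_i,m)$. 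The leftover ``$+1$'' from the additional recommendation collapses into the per-type maximum via $1+\tfrac{\bP(k',m)}{1-\bP(k',m)}=\tfrac{1}{1-\bP(k',m)}$, and the resulting expression is bounded by $\prod_{i=1}^{h}p_{\Pi_i}(\bb_i^{\Pi,\bq})\cdot\tfrac{p_{\max}}{1-p_{\max}}\le p_{\max}^h\cdot\tfrac{p_{\max}}{1-p_{\max}}$.

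\textbf{Finite termination and $\varepsilon$-optimality.} Set $h^\star:=\bigl\lceil\log_{p_{\max}}\!\bigl(\tfrac{\varepsilon(1-p_{\max})}{p_{\max}}\bigr)\bigr\rceil$, so that $p_{\max}^{h^\star}\tfrac{p_{\max}}{1-p_{\max}}\le\varepsilon$. Whenever a prefix $\Pi$ with $|\Pi|\ge h^\star$ is popped, Line~\ref{bnbalg:refine} enforces $\tilde V\ge\unv_\Pi$, so by the decay bound $\ovv_{\Pi\oplus k}-\tilde V\le\varepsilon$ for every $k$, and Line~\ref{bnbalg:branching} prunes every child. Consequently no prefix of length exceeding $h^\star$ is ever enqueued, bounding the total iteration count by $O(|K|^{h^\star+1})$ and yielding finite termination. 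For the approximation guarantee, fix any optimal policy $\ps$ and let $\Pi_h^\star:=(\ps_1,\ldots,\ps_h)$. Since $\ps$ itself extends $\Pi_h^\star$ and $\ovv$ upper bounds the value of every such extension, $\ovv_{\Pi_h^\star}\ge V^{\ps}(\bq)=\vs$ for every $h$. Now I trace $\ps$ through the tree: $\Pi_0^\star=\varnothing$ is enqueued initially, and whenever $\Pi_h^\star$ is popped, $\Pi_{h+1}^\star$ is either enqueued or pruned on Line~\ref{bnbalg:branching}. Because finite termination forbids enqueueing infinitely many prefixes, some $\Pi_{h+1}^\star$ must be pruned; at that instant $\vs-\tilde V\le\ovv_{\Pi_{h+1}^\star}-\tilde V\le\varepsilon$, and the monotonicity of $\tilde V$ propagates this inequality to termination.

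\textbf{Main obstacle.} The only delicate estimate is the uniform decay bound, since $\ovv_{\Pi\oplus k}$ and $\unv_\Pi$ live at different depths and a priori involve different reference beliefs ($\bb_{h+2}^{\Pi\oplus k,\bq}$ vs.\ $\bb_{h+1}^{\Pi,\bq}$). Synchronizing them via the chained Bayes formula, so that both reduce to the common weight $\bq(m)\prod_{i=1}^h\bP(\Pi_i,m)$, is what makes the $p_{\max}$ bound appear; every remaining step is a structural observation about how the queue interacts with the prefixes of $\ps$.
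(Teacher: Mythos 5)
Your proof is correct and follows essentially the same route as the paper: a geometric bound of the form $p_{\max}^{H}\cdot\frac{p_{\max}}{1-p_{\max}}$ on the gap between upper and lower bounds forces all branches to be pruned by depth $H(\varepsilon)$, and tracing the prefixes of $\ps$ until one is necessarily pruned transfers the $\varepsilon$ guarantee to $\tilde V$ via its monotonicity. Your one refinement---bounding $\ovv_{\Pi\oplus k}-\unv_{\Pi}$ across a parent--child pair rather than $\ovv_{\Pi}-\unv_{\Pi}$ at a single node---aligns slightly more directly with the algorithm's pruning test (which compares the child's upper bound against a $\tilde V$ that is only guaranteed to dominate the \emph{parent's} lower bound), but this is a cosmetic tightening rather than a different argument.
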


\begin{proof}[\proofof{thm:bb-algorithm-bounded-error}]
    We start by proving that \Cref{bb-algorithm} terminates after at most $\left\lceil \log_{p_{\max}} \frac{\varepsilon (1 - p_{\max})}{p_{\max}} \right\rceil$ steps. For every prefix $\Pi$ of length $H$ it holds that:
    \begin{align*}
        \ovv - \unv &= \sum_{t=1}^{H} \prod_{i=1}^{t} p_{\pi_i}(\bb^{\pi, \bq}_i) + \prod_{i=1}^{H} p_{\pi_i}(\bb^{\pi, \bq}_{i}) \cdot V^{U}(\bb^{\pi, \bq}_{H+1}) - \sum_{t=1}^{H} \prod_{i=1}^{t} p_{\pi_i}(\bb^{\pi, \bq}_i) - \prod_{i=1}^{H} p_{\pi_i}(\bb^{\pi, \bq}_{i}) \cdot V^{L}(\bb^{\pi, \bq}_{H+1}) \\
        &= \prod_{i=1}^{H} p_{\pi_i}(\bb^{\pi, \bq}_{i}) \cdot V^{U}(\bb^{\pi, \bq}_{H+1}) - \prod_{i=1}^{H} p_{\pi_i}(\bb^{\pi, \bq}_{i}) \cdot V^{L}(\bb^{\pi, \bq}_{H+1}) \\
        &\leq \prod_{i=1}^{H} p_{\pi_i}(\bb^{\pi, \bq}_{i}) \cdot V^{U}(\bb^{\pi, \bq}_{H+1}) \\
        &\leq p_{\max}^H \cdot \frac{p_{\max}}{1-p_{\max}} \leq \varepsilon.
    \end{align*}
    Furthermore, notice every time the algorithm enters Line~\ref{bnbalg:branching} with a prefix $\Pi$ it holds that $\unv_{\Pi} \leq \tilde V$, as $\tilde V$ was updated earlier. Therefore, starting from depth $\left\lceil \log_{p_{\max}} \frac{\varepsilon (1-p_{\max})}{p_{\max}} \right\rceil+1$, no prefix will satisfy the condition, and the algorithm will terminate. After establishing that the algorithm terminates, we now show that it returns an $\varepsilon$-optimal solution. Assume towards contradiction that \Cref{bb-algorithm} returns a value $\tilde V$ such that $\vs - \tilde V > \varepsilon$. Denote the last prefix of $\ps$ that was pruned as $\Pi'$, and the value of $\tilde V$ that bounded it as $V'$. It holds that $\ovv_{\Pi'} < V' + \varepsilon$, as $\Pi'$ was bounded. Because $V' \leq \tilde V$, it holds that $\vs(\bq) \leq \overline{V_{\Pi'}} < \tilde V + \varepsilon$, which contradicts the assumption.
\end{proof}

\begin{remark}
  The output prefix $\tilde{\Pi}$ is a finite sequence of categories, while a policy is defined as an infinite sequence. We can extend $\tilde{\Pi}$ to an approximately optimal policy by using $\tilde{\Pi}$ for the first $h = |\tilde{\Pi}|$ rounds, and then repetitively picking the best-fixed category with respect to $\bb^{\Pi, \bq}_{h+1}$.
\end{remark}

\section{Synthetic Experiments}\label{sec:experiments}

In this section, we conduct experiments with two goals in mind. First, we complement our convergence result from \Cref{thm:convergence} by demonstrating that, in practice, the belief walk converges rather quickly. Second, using simulated data, we compare the performance of \Cref{bb-algorithm} with that of a state-of-the-art benchmark. 

\paragraph{Simulation details}
We generate instances using a random sampling procedure. We generate $\bP$ by independently sampling latent vectors for each user type and category from a normal distribution, computing entries of $\bP$ as negated cosine distances (representing a user's affinities to a category), and normalizing these entries. Probabilities are clipped to $[0.01, 0.99]$ for numerical stability. We generate $\bq$ by independently sampling logits from a normal distribution $\mathcal{N}(0, 0.5)$. Then, we transform them into a categorical distribution through the softmax function. All the simulations in this section were conducted on a desktop PC equipped with an 11th Gen Intel(R) Core i5-11600KF @ 3.90GHz processor and 16 GB of RAM.
, and only the CPU was used. Running the experiments took several hours.

\begin{figure}
   \centering
   \includegraphics[width=.5\linewidth]{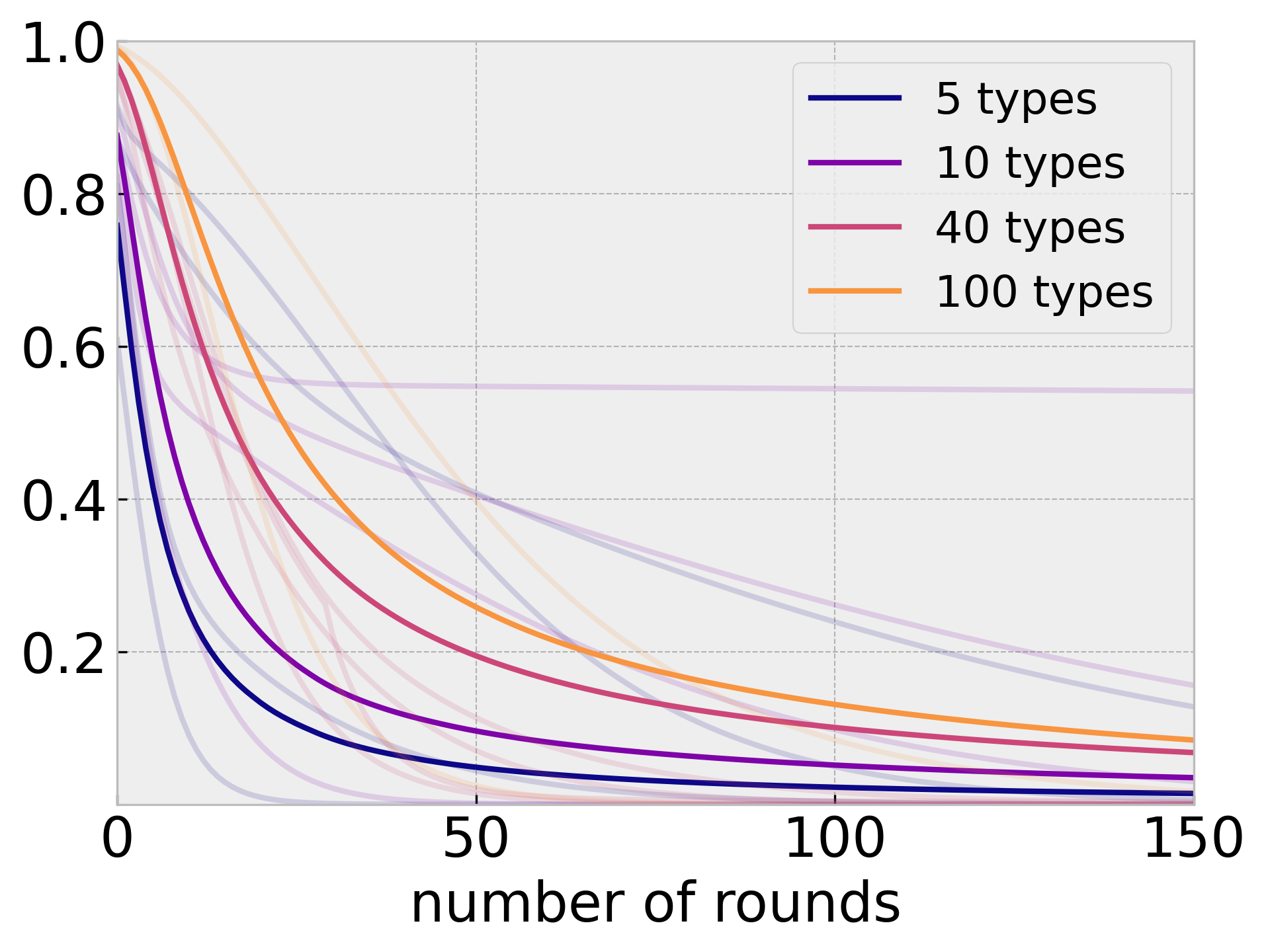}
   \caption{Convergence of beliefs under optimal policies. The number of categories is set to $10$. The x-axis is the number of rounds, and the y-axis is the uncertainty in the user type. The transparent lines illustrate individual runs, and the solid lines represent averages over $500$ runs.}
   \label{fig:uncertainty}
\end{figure}

\paragraph{Convergence of beliefs}
\Cref{thm:convergence} establishes that the optimal policy eventually converges to picking a fixed category. While the theorem guarantees convergence, it does not provide explicit rates. Equivalently, convergence can be analyzed in terms of certainty about a user's type, represented by proximity to the vertex to which the belief walk converges (recall the proof of \Cref{thm:convergence}). Since beliefs update according to Bayes' rule, they converge at a geometric rate once the policy becomes fixed. In other words, further exploration yields diminishing returns when a belief is sufficiently close to a vertex. Thus, it is tempting to assume that the optimal policy myopically maximizes value for that vertex. On the other hand, a poorly chosen myopic policy can fail drastically, as \Cref{prop:myopic-policy-suboptimality} illustrates. We resolve these conflicting observations through simulations.

\Cref{fig:uncertainty} shows how \emph{uncertainty} in user type, defined as the $l_1$-distance from the vertex to which the belief converges under the optimal policy, evolves throughout the session. We vary the number of user types while keeping the number of categories constant. For each problem size, we report the average uncertainty and several individual runs. Despite the heterogeneity of individual runs, their geometric convergence property roughly transfers to averaged curves: Exponential functions fitted to these curves are almost identical to the originals, with correlation coefficients of at least $R^2=0.98$. This aligns with our intuition that early rounds are most important in terms of both expected reward and information.

Analyzing individual runs reveals notable patterns. While in some sessions, the optimal policies are fixed from the start, in others, recommendations switch (as characterized by jumps in the slope). This reflects the short-term versus long-term reward trade-off discussed throughout the paper: The optimal policy may initially prioritize immediate rewards before switching to a riskier recommendation that increases certainty and yields greater returns in the long run. 
Despite this, all the presented curves strictly decrease, suggesting that certainty increases monotonically. However, we found that in rare cases, the optimal policy can move away from a vertex before converging to it.
This resolves the above conflict: Even if the belief approaches some vertex, the optimal policy may eventually lead to a different vertex. We exemplify this behavior in \Cref{sec:belief-walks}.

\begin{figure*}
\centering
\begin{subfigure}{0.02\textwidth}
   \centering
   \includegraphics[width=\linewidth]{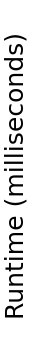}
   \vspace{0.7cm} 
\end{subfigure}%
\hspace{0.1cm} 
\begin{subfigure}{0.3\textwidth}
   \centering
   \includegraphics[width=\linewidth]{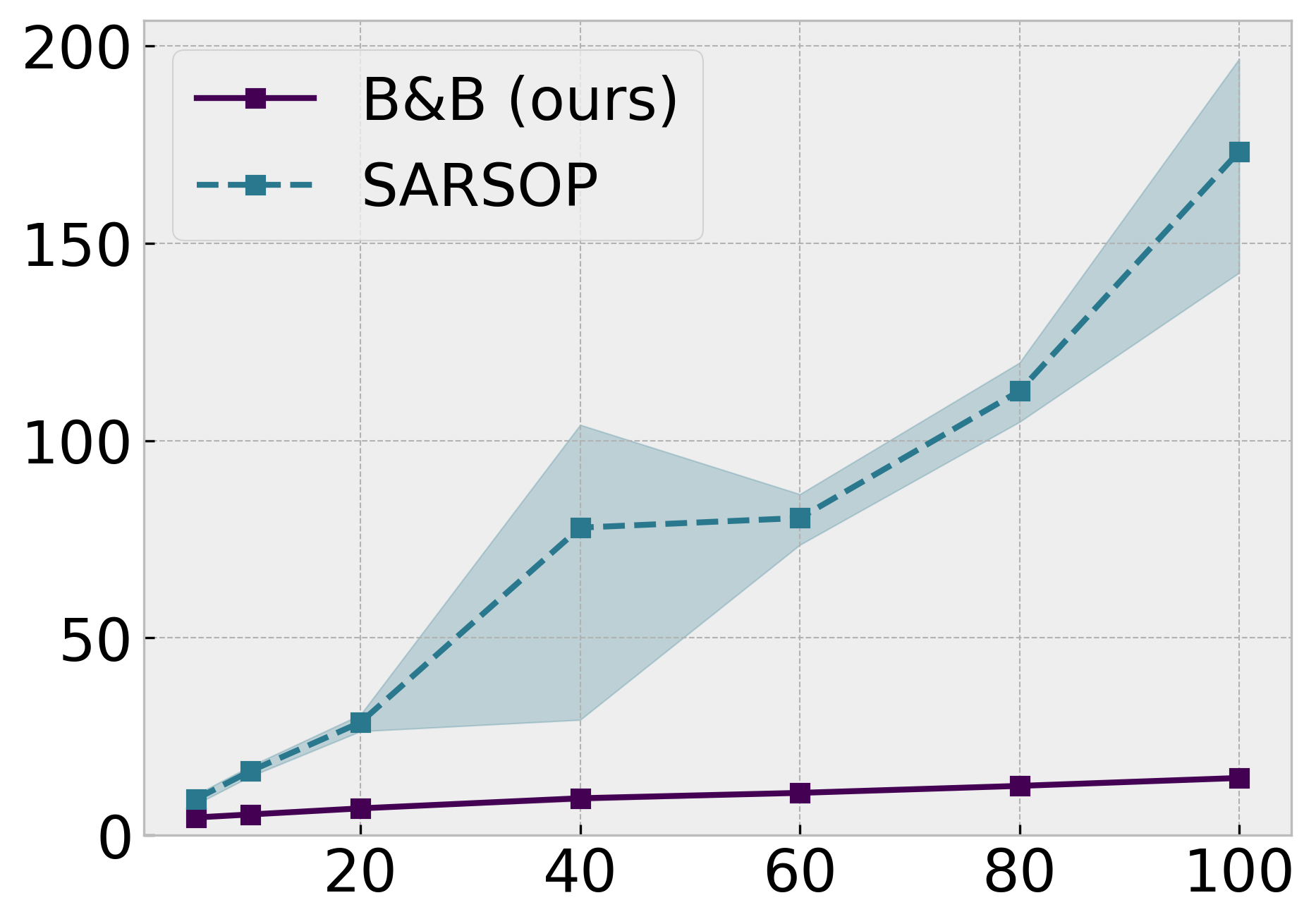}
   \caption{x-axis varies types; 10 categories}
   \label{fig:types_time}
   \vspace{0.5cm} 
\end{subfigure}%
\hspace{0.2cm} 
\begin{subfigure}{0.307\textwidth}
   \centering
   \includegraphics[width=\linewidth]{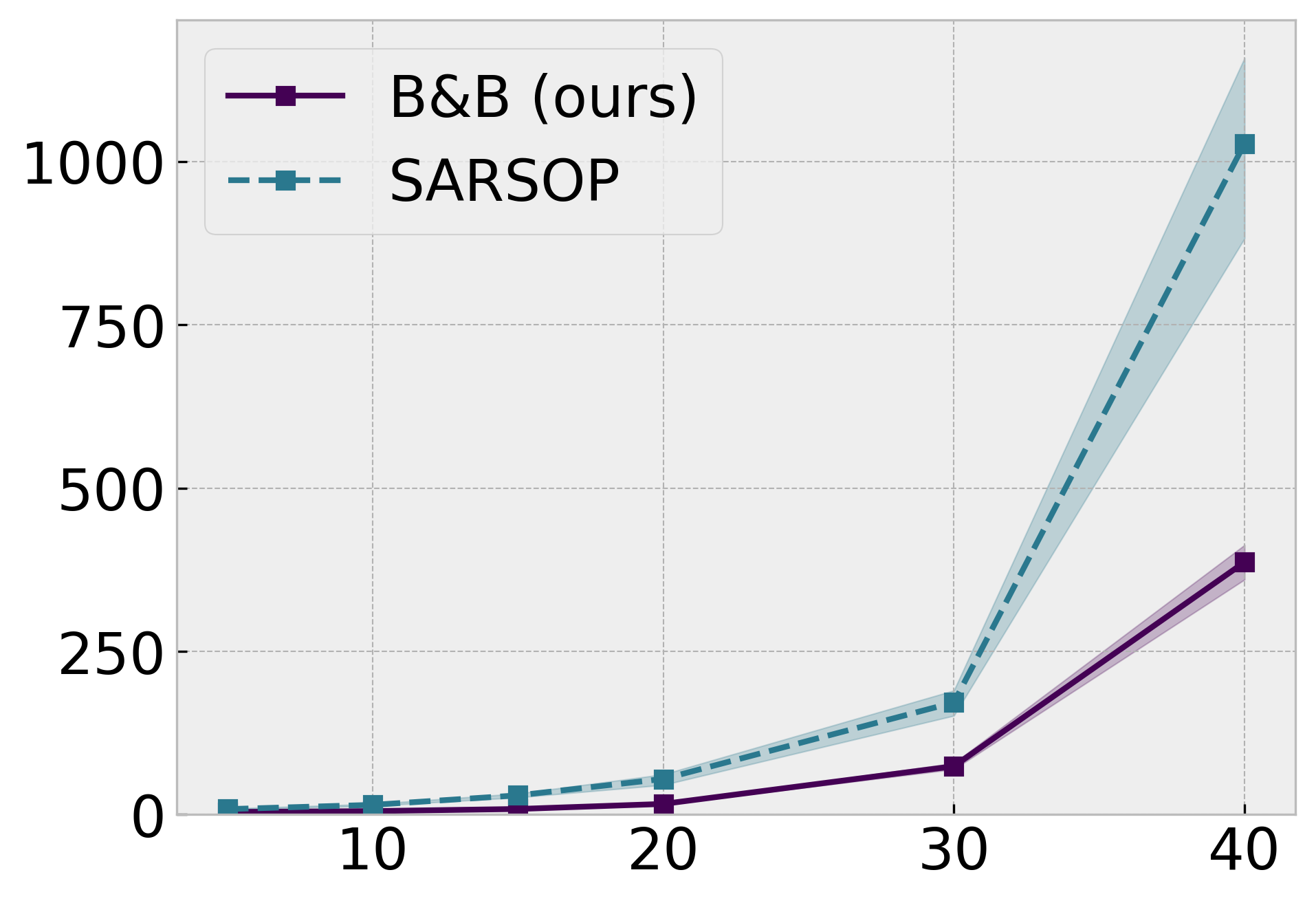}
   \caption{x-axis varies types and categories}
   \label{fig:actions_types_time}
   \vspace{0.5cm} 
\end{subfigure}%
\hspace{0.2cm} 
\begin{subfigure}{0.3\textwidth}
   \centering
   \includegraphics[width=\linewidth]{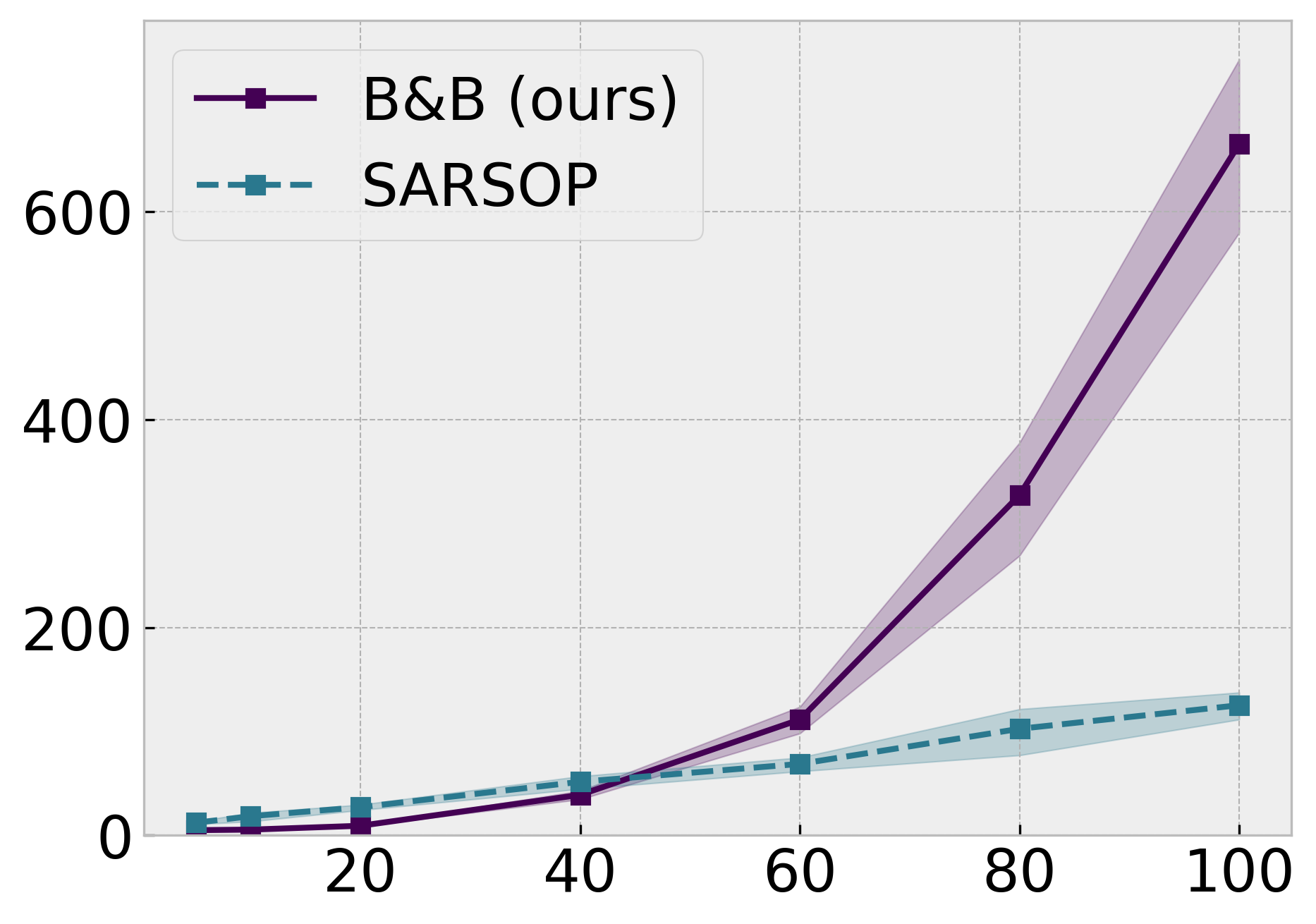}
   \caption{x-axis varies categories; 10 types}
   \label{fig:actions_time}
   \vspace{0.5cm} 
\end{subfigure}%
\caption{Runtime comparison (in milliseconds) between \Cref{bb-algorithm} and a POMDP solver SARSOP on randomly generated synthetic data. Each data point represents the average runtime over $500$ instances of $\prob$. Shaded intervals represent $95\%$ bootstrap confidence intervals of the empirical average. Both algorithms stop when they reach a precision of $\varepsilon=10^{-6}$.\label{fig:sarsop}}
\end{figure*}

\paragraph{Runtime comparison}
Our model is novel, so there are no specifically tailored baselines. However, since it can be cast as a POMDP, we can compare \Cref{bb-algorithm} with more general solvers. As a baseline, we have chosen SARSOP, a well-known offline point-based POMDP solver \citep{kurniawati2009sarsop}. While \Cref{bb-algorithm} is straightforward, 
SARSOP is rather complex. It represents the optimal policy through $\alpha$-vectors (a convex piece-wise linear approximation of the value function) and clusters sampled beliefs to estimate the values of new ones. We used an open-source implementation of SARSOP,\footnote{\url{https://github.com/AdaCompNUS/sarsop}}, and, for a fair comparison, implemented \Cref{bb-algorithm} in the same language. Importantly, both algorithms are exact: By construction, they compute the same optimal policy for a given instance up to the specified precision. Therefore, the comparison we present next reflects differences solely in computational efficiency rather than solution quality. 

\Cref{fig:sarsop} presents the runtime comparison between \Cref{bb-algorithm} and SARSOP. While \Cref{bb-algorithm} dominates SARSOP on rectangular problems with a few categories (\Cref{fig:types_time}) and overperforms SARSOP on square problems (\Cref{fig:actions_types_time}), it underperforms when the number of categories is much higher than the number of user types (\Cref{fig:actions_time}). Statistical significance of these performance differences is confirmed using the Wilcoxon signed-rank test, with all comparisons yielding $p$-values less than $10^{-10}$.

We hypothesize that this result is due to SARSOP more effectively handling similar categories (similar associated rows in the matrix $\bP$) through the $\alpha$-vector representation and clustering heuristic. Another explanation is that \Cref{bb-algorithm} explores the policy space; hence, the branching factor is the number of categories. In contrast, SARSOP explores the belief space, whose dimension is the number of user types. Consequently, we could expect SARSOP to struggle in cases with many user types and \Cref{bb-algorithm} to encounter challenges in cases with many categories. Overall, each algorithm excels under different conditions.

\section{Experiments with MovieLens Data}\label{sec:movieLens}

In this section, we demonstrate the applicability of our approach to real-world data by conducting experiments on the MovieLens 1M dataset~\citep{harper2015movielens}. We start by describing the necessary steps to obtain an instance of our model from standard recommender system data, and then evaluate the performance of \Cref{bb-algorithm} against SARSOP on this obtained instance. The experiments were conducted on a MacBook equipped with an Apple M3 processor and 18 GB of RAM, using only the CPU. Running the simulations took approximately one hour.

\subsection{Constructing an $\prob$ Instance}

The MovieLens 1M dataset contains 1 million ratings from 6040 users on 3706 movies, where each rating is an integer value between 1 and 5. This is a standard representation of a recommender system problem, namely a sparse user-item rating matrix, where observed entries represent user ratings for items. Recall that our model consists of a dense probability matrix between user types and content categories, accompanied by a prior distribution over user types. Hence, our model cannot be applied directly, and the following two transformations are required.

\paragraph{Deriving categories and user types.} First, the sparse rating matrix must be aggregated into a concise representation through clustering of users and items. The task of clustering in recommender systems is well-established, with numerous methods proposed in the literature (e.g.,\citep{coclustering, recommender-systems-for-large-scale-e-commerce-scalable-neighborhood-formation-using-clustering, gong2010collaborative, xu2012exploration, a-clustering-approach-for-personalizing-diversity-in-collaborative-recommender-systems}). In our implementation, we adopt spectral co-clustering, which jointly clusters users and items based on the structure of their interactions~\citep{coclustering}. We note this method typically returns the same number of clusters for users and items, which may be inappropriate for certain applications. In such cases, a secondary clustering procedure may be applied to refine the granularity of either partition.

\paragraph{Estimating preferences and prior distribution.} After deriving the categories and user types through clustering, the clustered data must be used to obtain the prior distribution and the probability matrix. We naturally construct the prior $\bq$ using the cluster size as a proxy. Formally, the prior probability of each user type is the number of users assigned to that type divided by the total number of users in the dataset. For the probability matrix $\bP$, we compute the mean ratings within cluster pairs and normalize them to $[0,1]$. While this is a straightforward approach, alternative transformations can be employed, such as using percentile rankings or more sophisticated normalization techniques. 

\subsection{Simulations Setup and Results}

After outlining the pipeline for transforming conventional recommender systems into $\prob$ instances, we compare \Cref{bb-algorithm} with SARSOP across instances extracted from the MovieLens dataset. We vary the instance size by adjusting the number of clusters forwarded to the Spectral co-clustering algorithm. To assess the robustness of the comparison, we tested both algorithms on 500 randomly generated noise samples for each matrix size. Specifically, we added Gaussian noise ($\sigma=0.005$) to the extracted probability matrices and evaluated the algorithms' performance on these perturbed instances.

The results of the comparison are presented in \Cref{fig:movielens}. The resulting 95\% bootstrap confidence intervals are presented as shaded regions around the mean curves, though they appear narrow due to the consistent runtime performance across experimental runs. As can be seen, \Cref{bb-algorithm} consistently outperforms SARSOP across all matrix dimensions, with the performance gap increasing with the problem's size. Statistical significance is confirmed using the Wilcoxon signed-rank test, which yields $p$-values less than $10^{-10}$ for all comparisons. This aligns with the results of the synthetic experiments presented in \Cref{sec:experiments} (Figure~\ref{fig:actions_types_time}), demonstrating that our algorithm remains effective even when applied to noisy, real-world data.

\begin{figure*}
\centering
\begin{subfigure}{0.03\textwidth}
   \centering
   \includegraphics[width=\linewidth]{simulations/time_comp_y_axis_caption.png}
   \vspace{0.7cm} 
\end{subfigure}%
\hspace{0.1cm} 
\begin{subfigure}{0.45\textwidth}
   \centering
   \includegraphics[width=\linewidth]{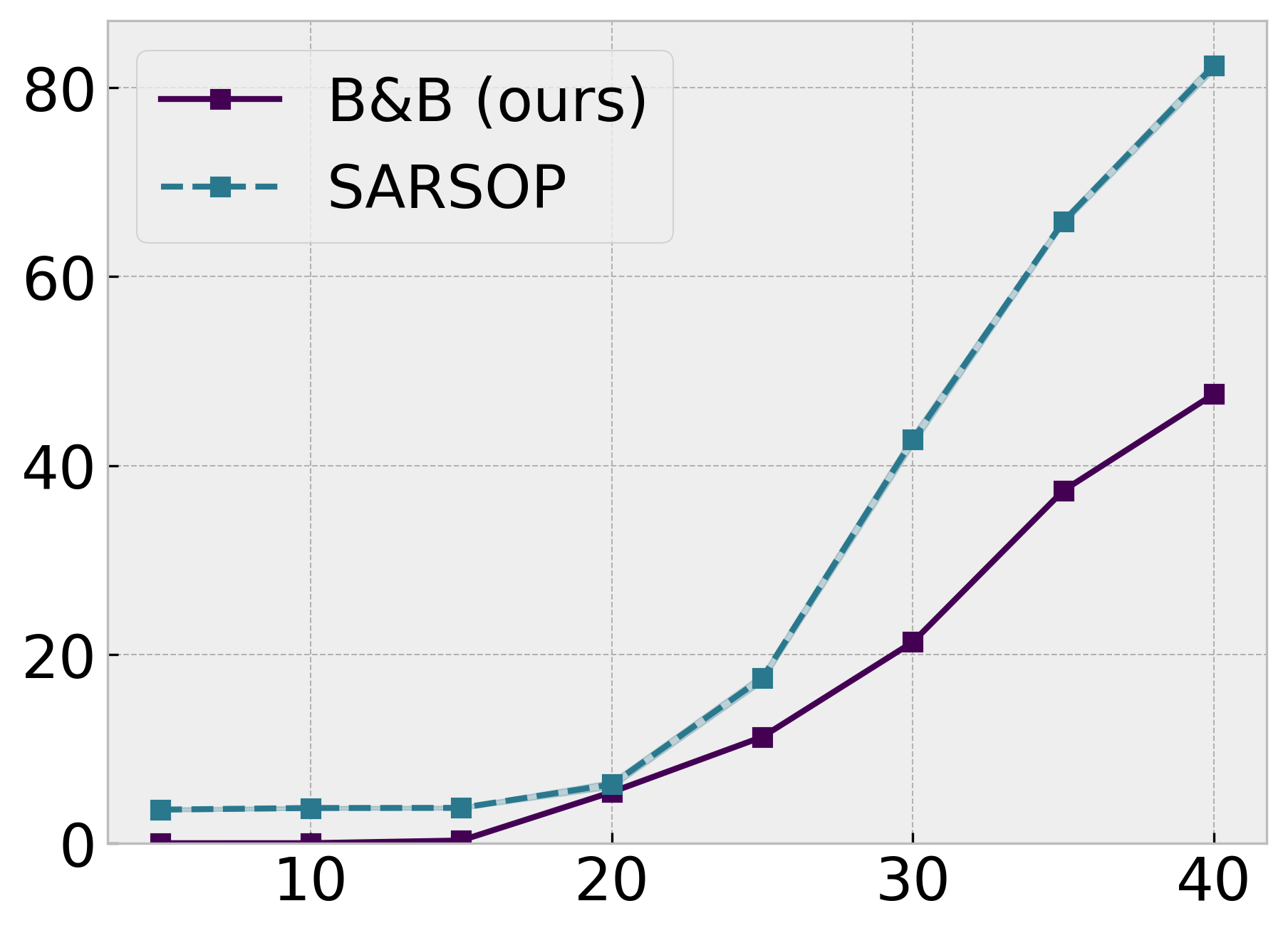}
   \caption{x-axis varies types and categories}
   \vspace{0.5cm} 
\end{subfigure}%
\caption{Runtime comparison (in milliseconds) between \Cref{bb-algorithm} and a POMDP solver SARSOP on MovieLens-derived data with added noise. Each data point represents an average runtime over 500 $\prob$ instances with Gaussian noise ($\sigma= 0.005$) added to probability matrices extracted from the MovieLens dataset. Shaded intervals represent 95\% bootstrap confidence intervals of the empirical average. Both algorithms terminate when they achieve a precision of $\varepsilon = 10^ {- 6} $.\label{fig:movielens}}
\end{figure*}

\section{Discussion and Future Work}\label{sec:disc}

We have introduced a model that captures two intertwined decision-making challenges: Partial information with aggregated user data and the risk of churn. We proposed algorithms for rectangular instances as a warm-up, and then showed that optimal policies converge and eventually act greedily. We have proposed lower and upper bounds on the optimal social welfare and formulated a B\&B algorithm that uses them. Finally, we have demonstrated that our B\&B algorithm performs comparably to a state-of-the-art baseline on both synthetic and real-world instances.

We see several directions for future work.  First, despite the nontrivial analysis, we still lack either a provably optimal polynomial-time algorithm for computing an optimal policy or a formal proof of hardness. A resolution in either direction would be informative: A polynomial-time algorithm would guide the design of efficient methods, while hardness results would clarify what makes certain instances difficult to solve.

Second, while our model provides a clear framework for analyzing the structure of optimal decision-making under uncertainty and user churn, it assumes a dichotomous structure: Either the user likes the item and remains in the system or dislikes it and leaves. In practice, users may stay in the system after receiving negative feedback or provide feedback with varying intensity. These richer dynamics can be modeled within a more general POMDP framework by enriching the transition, observation, and reward functions, while preserving the core components of our model--namely, the abstraction to content categories and user types, and uncertainty over user type. In such a case, we can still apply Bayesian updates after receiving feedback from the user; however, a new approach should be adopted, as belief walks become more complex since different feedback leads to a different updated belief. This technical difficulty undermines both our theoretical results and our algorithm design, which rely on deterministic updates. Future work in this direction could involve designing structure-aware POMDP solvers for practical use or developing more sophisticated analytical tools to establish (or rule out) convergence in probabilistic belief dynamics.

\bibliographystyle{plainnat}
\bibliography{main}

\appendix

\section{Showcasing Belief Walks under Various Policies} \label{sec:belief-walks}
In this section, we illustrate the structure of belief walks under various policies, highlighting the distinct behaviors and trade-offs that emerge in the context of belief walks. Each graph corresponds to a different instance with three categories and three user types ($\abs{M} = \abs{K} = 3$). We provide the full details of the instances below. In each graph, we plot the belief walk induced by three policies. Each belief in the 3-dimensional simplex is characterized by a probability for the first type $\bm b(m_1)$, given in the x-axis, probability in the second type $\bm b(m_2)$, given in the y-axis, and probability for the third type $\bm b(m_3)$, given implicitly by $\bm b(m_3) = 1 -\bm b(m_1) - \bm b(m_2)$. The plotted policies are:
\begin{enumerate}
\item The optimal policy, calculated using Algorithm~\ref{bb-algorithm}.
\item The best fixed-action policy, which always recommends the category that maximizes the value function over all single-action policies with respect to the prior. Put formally - \[ \pi^f = \left( \tilde k \right)_{t=1}^{\infty}, \quad \tilde k = \argmax_{k \in K} \sum_{m \in M} \bm{q}(m) \cdot \frac{\bm{P}(k, m)}{1 - \bm{P}(k, m)}. \]
For the remainder of the section, we abbreviate "best fixed-action policy" as "BFA policy."
\item The myopic policy, which always recommends the category that yields the highest immediate reward in the current belief and then updates its belief afterward. Formally, \[ \pi^m = \left( k_t \right)_{t=1}^{\infty}: k_i = \argmax_{k \in K} \sum_{m \in M} \bm{b_i}(m) \cdot \bm{P}(k, m), \ \ \bm b_1 = \bm q, \ \ \forall i > 1: \bm b_i = \tau(\bm b_{i-1}, k_{i-1}). \]
\end{enumerate}

Each belief in the belief path produced by the optimal policy is assigned a numerical value based on its sequence position. The plots display these values as small black numbers beside some arrows. Each arrow signifies a Bayesian update followed by a recommendation, with the arrow's direction indicating the movement from the previous belief to the updated belief. The black dot depicts the initial prior $\bm q$ over the user type, marking the starting point for any plotted belief walk.

\begin{figure}
\centering
\begin{subfigure}{0.45\textwidth}
    \includegraphics[width=\linewidth]{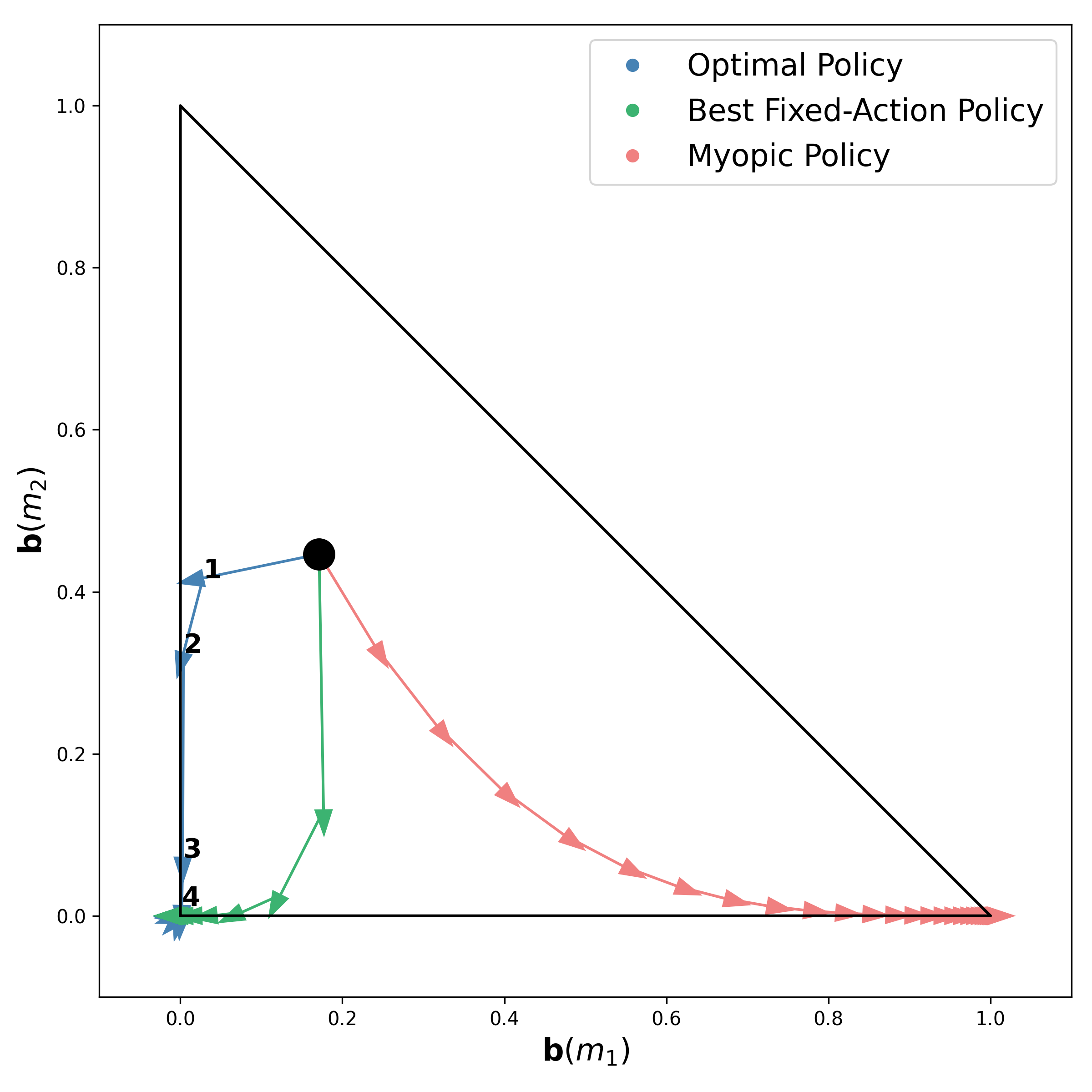}
    \caption{Figure accompanying Example~\ref{example:1} - the optimal policy avoids myopic behavior.}
    \label{fig:belief-walk1}
\end{subfigure}
\hfill
\begin{subfigure}{0.45\textwidth}
    \includegraphics[width=\linewidth]{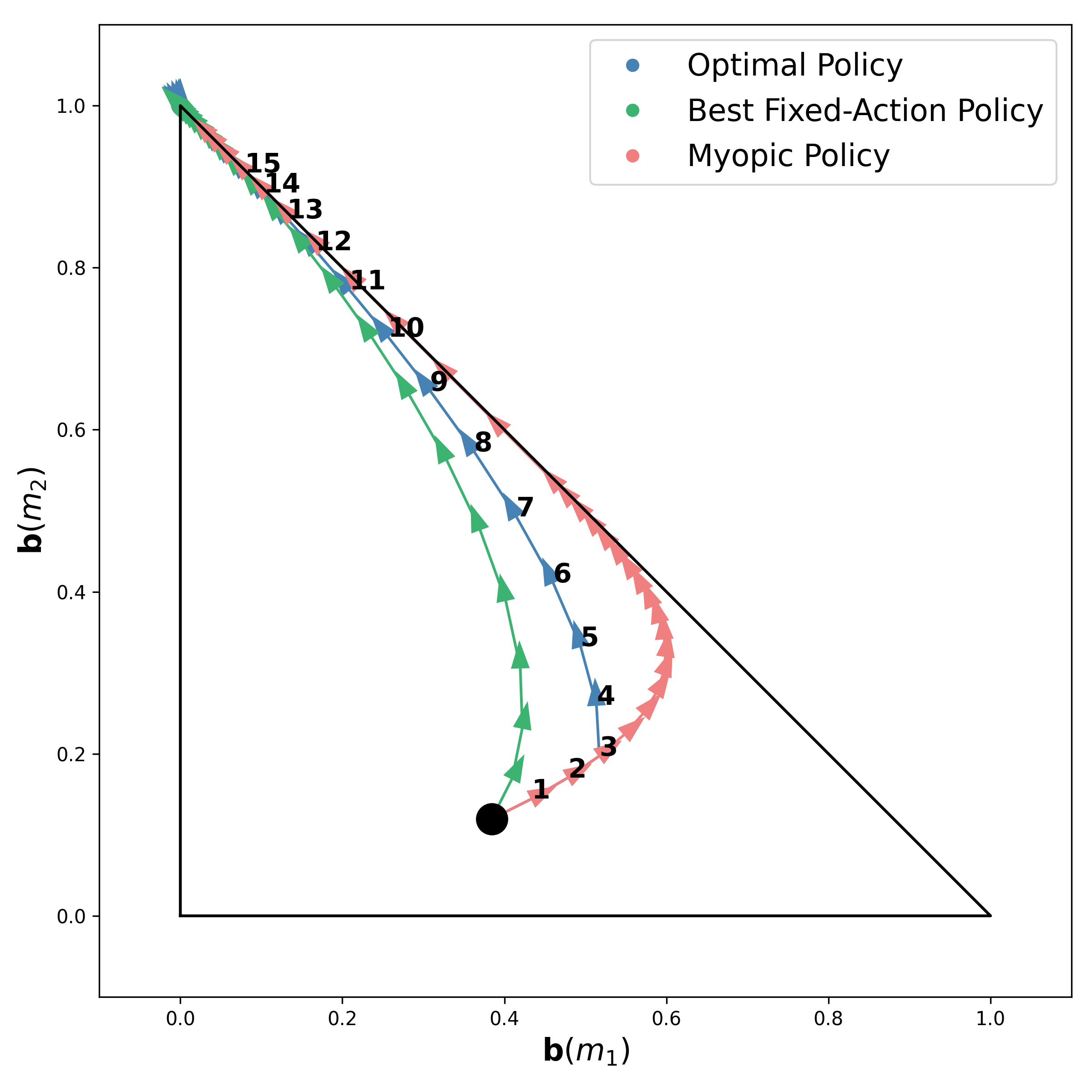}
    \caption{Figure accompanying Example~\ref{example:2} - all policies converge to the same vertex.}
    \label{fig:belief-walk2}
\end{subfigure}

\vspace{1em}

\begin{subfigure}{0.45\textwidth}
    \includegraphics[width=\linewidth]{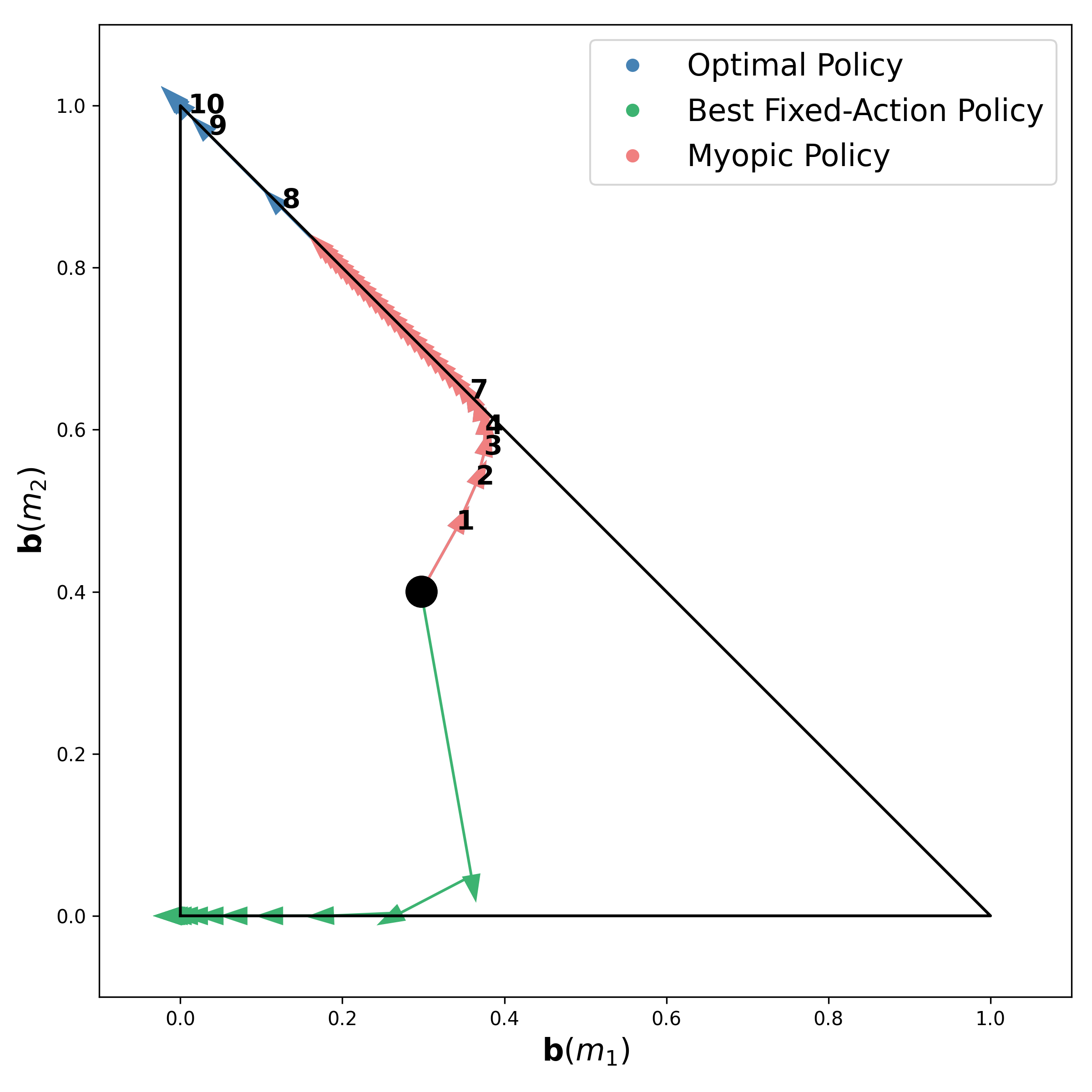}
    \caption{Figure accompanying Example~\ref{example:3} - the optimal policy diverges from the BFA policy prefix and forsakes the myopic policy for faster convergence.}
    \label{fig:belief-walk3}
\end{subfigure}
\hfill
\begin{subfigure}{0.45\textwidth}
    \includegraphics[width=\linewidth]{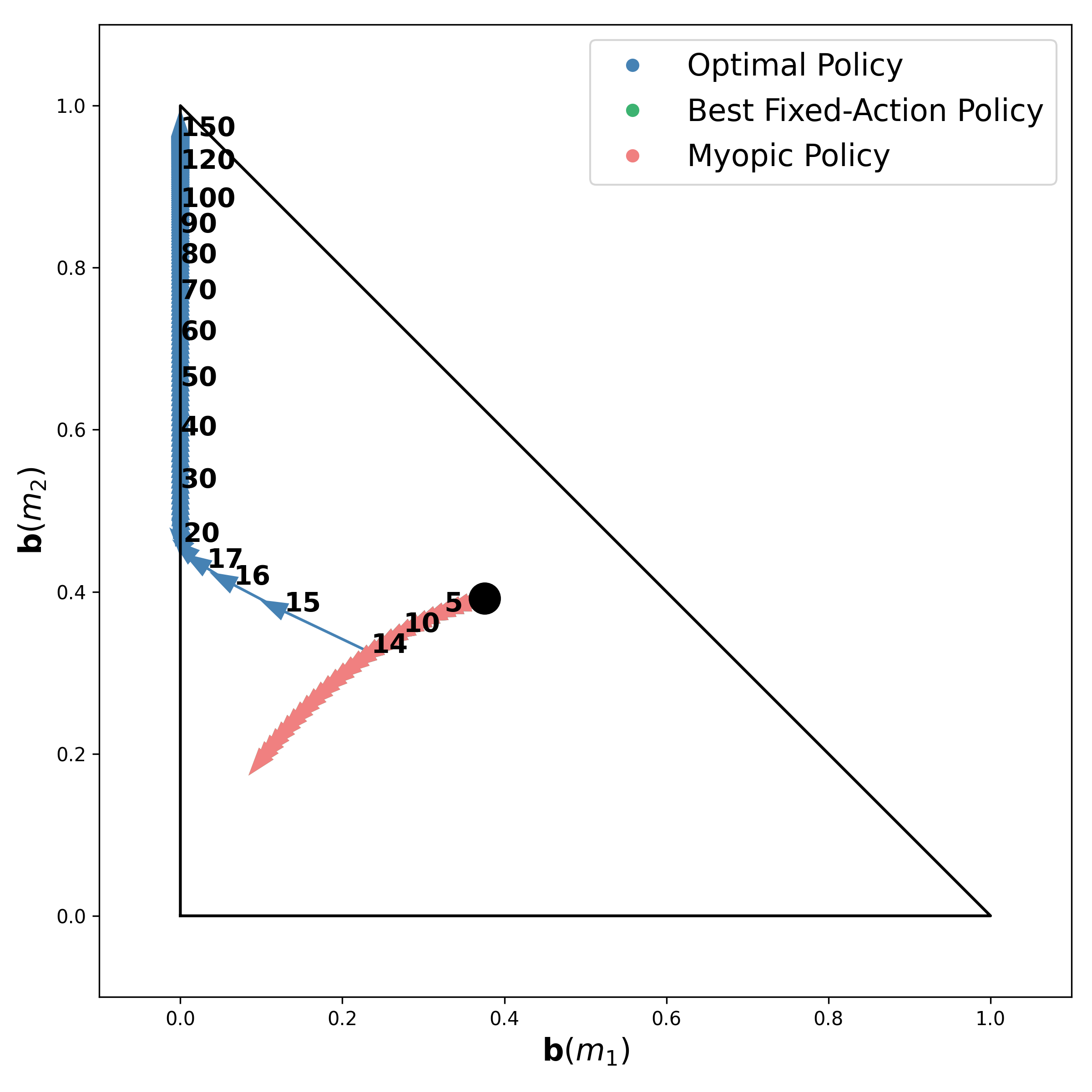}
    \caption{Figure accompanying Example~\ref{example:4} - there is a decrease followed by an increase in the belief of the type the belief walk converges to.}
    \label{fig:belief-walk4}
\end{subfigure}
\caption{Belief walks under different policies}
\label{fig:belief-walks}
\end{figure}

\begin{example} \label{example:1}
The $\prob$ instance in Figure~\ref{fig:belief-walk1} is $\langle K = \{ k_1, k_2, k_3 \},\ M = \{ m_1, m_2, m_3 \},\ \bm{q}_1,\ \bm{P}_1 \rangle$, where:
\[
    \bm{q}_1 = \begin{pmatrix} 0.1713 \\ 0.4465 \\ 0.3822 \end{pmatrix},\ 
    \bm{P}_1 = \begin{pmatrix} 0.8611 & 0.4591 & 0.6862 \\ 0.0969 & 0.5531 & 0.8604 \\ 0.5055 & 0.1430 & 0.8879 \\ \end{pmatrix}.
\]
In this instance, the optimal and BFA policies converge to the user type $m_3$, whereas the myopic policy converges to $m_1$. This observation underlines the central idea of Proposition~\ref{prop:myopic-policy-suboptimality}, which asserts that the myopic policy may overlook future rewards, sometimes leading to convergence to a suboptimal user type. Even though the optimal and BFA policies reach the same user type, their pathways differ. This underscores the importance of meticulous initial exploration, as the sequence of recommendations before convergence significantly influences the overall rewards.
\end{example}

\begin{example} \label{example:2}
The $\prob$ instance in Figure~\ref{fig:belief-walk2} is $\langle K = \{ k_1, k_2, k_3 \},\ M = \{ m_1, m_2, m_3 \},\ \bm{q}_2,\ \bm{P}_2 \rangle$, where:
\[
    \bm{q}_2 = \begin{pmatrix} 0.3844 \\ 0.1197 \\ 0.4959 \end{pmatrix},\
    \bm{P}_2 = \begin{pmatrix} 0.6848 & 0.9100 & 0.5457 \\ 0.7741 & 0.8284 & 0.5833 \\ 0.1931 & 0.9127 & 0.5273 \\ \end{pmatrix}.
\]
In this instance, all three policies converge to $m_2$, albeit through different paths. Initially, for the first three recommendations, the optimal policy aligns with the myopic policy. Later, it diverges to secure higher future rewards, considering that it might get lower immediate rewards.
\end{example}

\begin{example} \label{example:3}
The $\prob$ instance in Figure~\ref{fig:belief-walk3} is $\langle K = \{ k_1, k_2, k_3 \},\ M = \{ m_1, m_2, m_3 \},\ \bm{q}_3,\ \bm{P}_3 \rangle$, where:
\[
    \bm{q}_3 = \begin{pmatrix} 0.2972 \\ 0.4001 \\ 0.3027 \end{pmatrix},\
    \bm{P}_3 = \begin{pmatrix} 0.5492 & 0.0560 & 0.8878 \\ 0.2195 & 0.8576 & 0.2072 \\ 0.7674 & 0.7992 & 0.4051 \\ \end{pmatrix}.
\]
In this instance, the BFA policy deviates from optimal and myopic policies early. Initially, these two policies align in their recommendations until the optimal policy transitions from belief (7) to belief (8), as shown in the graph. Although this recommendation provides a lower immediate reward than the myopic recommendation, it leads to a belief state much closer to $m_2$ than the updated belief after the myopic recommendation. This emphasizes the long-term rewards in this scenario, highlighting the common trade-off between short-term and long-term goals in sequential decision-making.
\end{example}

\begin{example} \label{example:4}
The $\prob$ instance in Figure~\ref{fig:belief-walk4} is $\langle K = \{ k_1, k_2, k_3 \},\ M = \{ m_1, m_2, m_3 \},\ \bm{q}_4,\ \bm{P}_4 \rangle$, where:
\[
    \bm{q}_4 = \begin{pmatrix} 0.3755 \\ 0.3921 \\ 0.2324 \end{pmatrix},\
    \bm{P}_4 = \begin{pmatrix} 0.4011 & 0.8521 & 0.8301 \\ 0.7683 & 0.7837 & 0.8314 \\ 0.7674 & 0.7832 & 0.4051 \\ \end{pmatrix}.
\]
In this instance, the BFA and myopic policies coincide and act optimally until the 14th step, at which point they diverge from the optimal policy. This figure is an example of the phenomenon described in Section~\ref{sec:experiments}. In the first 14 steps, the entry in the belief walk that corresponds to user type $m_2$ decreases, and one could expect the belief walk to approach a different vertex. Then, in step 15, as the optimal policy diverges from the other two policies, we see a sudden increase in $\bm{b}(m_2)$, which persists throughout the remainder of the belief walk. In conclusion, it is possible to see a decrease before the increase in the certainty of the user type to which the belief walk finally converges.
\end{example}

\end{document}